%% file: example.tex
\documentclass{article}

\usepackage[preprint]{corl_2026} 

\usepackage{graphicx}
\usepackage{wrapfig}
\usepackage{amsmath}
\usepackage{amssymb}
\usepackage{amsthm}
\usepackage{amsthm}
\usepackage{algorithm}
\usepackage{hyperref}
\usepackage{algorithmic}
\usepackage{adjustbox}
\usepackage{multirow}
\usepackage{booktabs}
\usepackage{caption}
\usepackage{enumitem}
\usepackage{placeins}
\newtheorem{lemma}{Lemma}

\newcommand{\method}{MIRROR}
\newcommand{\norm}[1]{\left\lVert#1\right\rVert}

\setlist[itemize]{itemsep=0pt,parsep=0pt,topsep=0pt,partopsep=0pt}

\title{\method{}: Reflecting Hidden Symmetries from Offline Robot Interaction Data}

\author{
  Shiqi Liu$^{1}$ \quad
  Yihang Yao$^{1}$ \quad
  Peide Huang$^{1}$ \quad
  Mengdi Xu$^{2}$ \quad
  Ding Zhao$^{1}$ \\
  $^{1}$Carnegie Mellon University \quad $^{2}$Stanford University
}

\hypersetup{
  pdftitle={MIRROR: Reflecting Hidden Symmetries from Offline Robot Interaction Data},
  pdfauthor={Shiqi Liu, Yihang Yao, Peide Huang, Mengdi Xu, Ding Zhao}
}

\begin{document}
\maketitle


\input{sections/abstract}

\keywords{Symmetry Discovery, Equivariant Policy Learning, Robot Manipulation}

\input{sections/introduction}
\input{sections/related_works}
\input{sections/preliminary}
\input{sections/method}
\input{sections/experiments}
\input{sections/sim2real}
\input{sections/conclusion}
\input{sections/limitations}


\clearpage
\acknowledgments{We thank Haohong Lin and William Jongwon Han for their constructive feedback and informative discussions.}


\bibliography{example}  

\clearpage
\appendix
\input{appendix/overview}
\input{appendix/prove}
\input{appendix/env_description}

\input{appendix/implementation_details}
\input{appendix/evaluation_all_envs}
\input{appendix/equivariant_bc}
\input{appendix/sim2real}

\end{document}

%% file: sections/abstract.tex
\begin{figure}[h]
  \centering
  \includegraphics[width=1.0\linewidth]{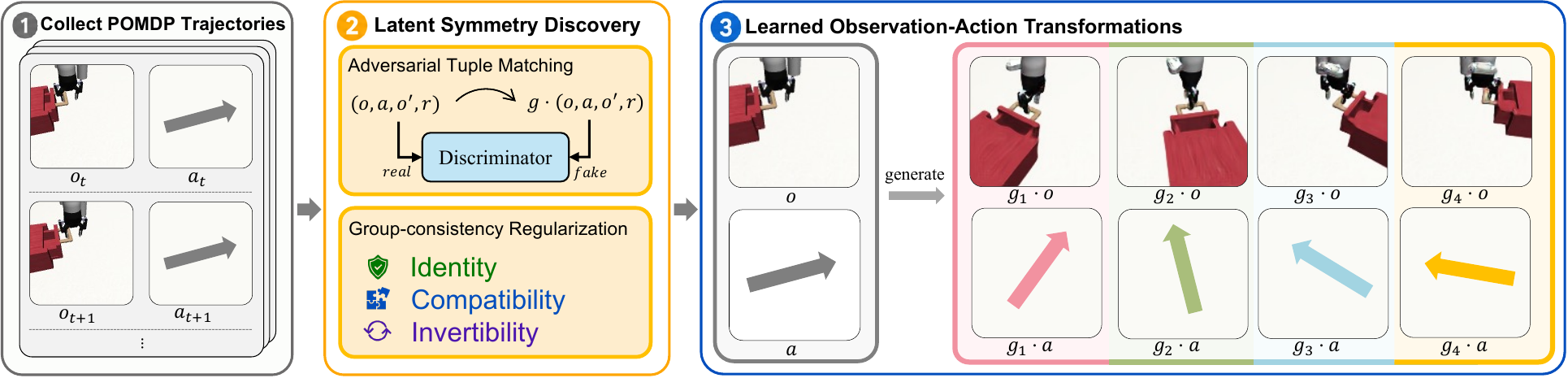}
  \caption{\textbf{Overview of \method{}.} From offline robot interaction data, \method{} learns how a specified symmetry group jointly transforms observations and actions. Adversarial matching encourages transformed transition tuples to preserve the original data distribution, while group-consistency regularization encourages identity, composition, and inverse consistency. The examples on the right illustrate the learned coupling: transforming a scene also transforms its associated robot command, mapping an observation--action pair $(o,a)$ to corresponding pairs $(g\cdot o,g\cdot a)$.
  }
  \label{fig:teaser}
\end{figure}

\begin{abstract}
  Symmetry-aware robot policy learning requires coupled observation--action transformations. For pixel observations, even simple physical symmetries can induce complex, viewpoint-dependent image transformations that are difficult to specify by hand. Existing symmetry-aware methods often rely on representations with predefined transformations, requiring preprocessing tailored to the sensing setup and task. We introduce \method{}, a framework that learns how a known symmetry group operates jointly on observations and actions in partially observable Markov decision processes (POMDPs) from offline interaction data. \method{} couples these transformations through adversarial tuple matching over one-step transition tuples, while explicit group-consistency regularization penalizes violations of the group-operation axioms. On four Robosuite tabletop manipulation tasks, \method{} achieves the lowest aggregate error on three of four symmetry-quality metrics, including a $\mathbf{61\%}$ reduction in compatibility error---which measures consistency under transformation composition---relative to an adapted LaLiGAN baseline. Using the learned transformations in frame-averaged behavior cloning (BC) yields higher mean success rates than vanilla BC on all four tasks, increasing average success from $80.48\%$ to $85.12\%$, with per-task gains up to $\mathbf{9.66}$ percentage points. We also repeat symmetry discovery and policy training in simulation with heightmap and segmentation-mask observations, then deploy the resulting policy zero-shot on a real Kinova GEN3 arm. Project page: \url{https://pomdp-mirror.github.io}.
\end{abstract}

%% file: sections/introduction.tex
\section{Introduction}

Symmetry can improve sample efficiency and generalization in robot policy learning by allowing experience to be shared across equivalent task configurations~\citep{wang2022so2, liu2023continual, ryu2023equivariant, huang2024fourier, eisner2024deep, yang2024equibot, wang2024equivariant, gao2024riemann, hu20253d, klee2026raven}. Exploiting this structure requires coupled observation--action transformations, which existing symmetry-aware methods typically assume are known. This assumption is difficult to satisfy when the robot observes the scene through images. For example, rotating a drawer changes both its appearance to a fixed camera and the direction needed to pull it open. Although the physical rotation is simple, the resulting image transformation depends on the scene geometry and viewpoint and generally cannot be expressed as an image rotation. Existing methods address this difficulty by using point clouds~\citep{ryu2023equivariant, huang2024fourier, eisner2024deep, yang2024equibot, wang2024equivariant, gao2024riemann} or projecting observations into representations with known group operations~\citep{wang2022so2, liu2023continual, hu20253d, klee2026raven}. These representations make the group operation explicit, but constructing them can require preprocessing tailored to the sensing setup and task. Extending the same approach to a new observation modality can therefore require redesigning how observations are represented and transformed. Thus, even when the relevant symmetry group is known, exploiting it still depends on manually specifying suitable representations and the corresponding observation--action transformations.

This motivates a natural question: can we learn the required representations and coupled observation--action transformations directly from environment interaction data? Existing symmetry-discovery methods provide a promising foundation; for example, LaLiGAN~\citep{yang2023latent} learns nonlinear symmetries of high-dimensional data through latent representations. Applying these approaches to visual control introduces two requirements. First, observations and actions must transform together to preserve their relationships with subsequent observations and rewards. Second, the learned transformations must satisfy the group-operation axioms: applying two transformations successively should agree with applying their composition. Distribution matching alone does not ensure this consistency, and even an exact linear group operation in latent space can lose it after approximate encoding and decoding. These requirements motivate learning coupled observation--action transformations while explicitly regularizing their group consistency.

We introduce \textbf{\method}, a framework that learns how a known symmetry group $G$ operates jointly on observations and actions from offline interaction data in partially observable Markov decision processes (POMDPs). The core idea is to match the distributions of original and transformed one-step transition tuples $\tau=(o,a,o',r)$, coupling observation and action learning through the observed transitions and rewards. We implement this distribution matching with an adversarial objective, which we call \emph{adversarial tuple matching}. Identity, compatibility, and invertibility regularizers penalize violations of the group-operation axioms. To represent nonlinear transformations of pixel observations, \method{} first compresses images with a pretrained, frozen perceptual autoencoder. Separate encoder--decoder pairs then map these perceptual latents and the actions to symmetry latent spaces with fixed linear representations of $G$. Both objectives operate on tuples of perceptual latents, actions, and rewards. Learning these mappings from each task's data allows the same discovery procedure to be used across tasks without manually specifying their coupled observation--action transformations.

We evaluate \method{} on four Robosuite~\citep{robosuite2020} tabletop manipulation tasks with RGB observations. It achieves the lowest aggregate error on three of four symmetry-quality metrics, including a $61\%$ reduction in compatibility error relative to our adapted LaLiGAN~\citep{yang2023latent} baseline. Incorporating the discovered transformations into frame-averaged behavior cloning (BC) yields higher mean success rates than vanilla BC on all four tasks. Average success increases from $80.48\%$ to $85.12\%$, with the largest per-task gain of $9.66$ percentage points on Cube Dropping. We also apply the framework to heightmap and segmentation-mask observations, repeating symmetry discovery and policy training in simulation before deploying the resulting policy zero-shot on a Kinova GEN3 arm.

Our contributions:
\begin{itemize}[itemsep=0pt, parsep=0pt, topsep=0pt, partopsep=0pt]
  \item \textbf{Adversarial symmetry discovery in POMDPs:} We formulate the discovery of coupled observation--action transformations as adversarial tuple matching over one-step transition tuples $(o, a, o', r)$.
  \item \textbf{Explicit group-consistency regularization:} We penalize violations of identity, compatibility, and invertibility, improving aggregate compatibility and invertibility errors over the adapted GAN baselines.
  \item \textbf{Policy learning with discovered symmetries:} We use the learned transformations in a frame-averaged BC policy, improving simulation success rates by up to $9.66$ percentage points over vanilla BC. We also demonstrate the framework with a second observation modality and zero-shot policy deployment on a Kinova GEN3 arm.
\end{itemize}

%% file: sections/related_works.tex
\section{Related Work}
\label{sec:related_works}

\paragraph{Exploiting symmetry in robot policy learning.}
Symmetry has been extensively studied as an inductive bias for robotic manipulation, yielding consistent gains in sample efficiency and generalization~\citep{wang2022so2, liu2023continual, ryu2023equivariant, huang2024fourier, eisner2024deep, yang2024equibot, wang2024equivariant, gao2024riemann, hu20253d, klee2026raven}. A central limitation is that these methods assume the way the symmetry group operates on observations and actions is known \emph{a priori}, which rarely holds for visual manipulation tasks. Existing work circumvents this with task-specific observation preprocessing, such as using point clouds from depth cameras~\citep{ryu2023equivariant, huang2024fourier, eisner2024deep, yang2024equibot, wang2024equivariant, gao2024riemann} or projecting observations into a representation where the symmetry operates linearly. Concrete instances of the latter include projecting depth images to a top-down heightmap~\citep{wang2022so2}, inpainting images to make the top-down view equivariant~\citep{liu2023continual}, mapping fisheye images to a 3D sphere~\citep{hu20253d}, and lifting 2D images to 3D ray features~\citep{klee2026raven}. No single recipe transfers across environments.
\method{} addresses this limitation by learning coupled nonlinear observation--action transformations for a specified group $G$ from offline interaction data, reducing dependence on manually designed, modality-specific transformations.

\paragraph{Data-driven symmetry discovery.}
Automatic symmetry discovery identifies unknown symmetries directly from data~\citep{dehmamy2021automatic, moskalev2022liegg, hu2025explicit} and has improved downstream tasks such as equation discovery and long-horizon forecasting~\citep{yang2023liegan, yang2023latent}.
However, two limitations restrict prior work's applicability to control in POMDPs.
First, prior methods discover symmetries on observations alone, whereas decision-making requires \emph{coupled} observation--action transformations that preserve the transition structure.
Second, prior methods either omit explicit group-consistency regularization entirely~\citep{desai2022symmetry, liu2025physically, yang2023liegan} or regularize only a subset of the group-operation axioms, as in LaLiGAN~\citep{yang2023latent}, which regularizes only the identity axiom and accordingly violates the other group-operation axioms in practice.
To address both gaps, we discover coupled observation--action transformations from one-step transition tuples $\tau=(o,a,o',r)$ via adversarial tuple matching, and we explicitly regularize the group-operation axioms (identity, compatibility, invertibility) to encourage the learned transformations to behave as valid group operations.

%% file: sections/preliminary.tex
\section{Preliminary}

\paragraph{Partially Observable Markov Decision Processes.}
A POMDP $\mathcal M=(\mathcal S,\mathcal A,\mathcal O,P,\Omega,R,\mu_0)$ has measurable state, action, and observation spaces $\mathcal S,\mathcal A,\mathcal O$, respectively. Let $\mathcal P(\mathcal X)$ denote the set of probability measures on $\mathcal X$. The maps $P:\mathcal S\times\mathcal A\to\mathcal P(\mathcal S)$ and $\Omega:\mathcal S\to\mathcal P(\mathcal O)$ are the transition and observation kernels, respectively. The initial state distribution is $\mu_0\in\mathcal P(\mathcal S)$, and $R:\mathcal S\times\mathcal A\to\mathbb R^{d_r}$ is a measurable $d_r$-dimensional reward function.

\paragraph{Groups, group operations, and representations.}
A \emph{group} is a set $G$ with an associative binary operation $\circ$, an identity element $e$, and inverses. A \emph{group operation}\footnote{We use the term \emph{group operation} (rather than the more common \emph{group action}) to avoid confusion with the \emph{actions} in a POMDP.} of $G$ on a set $\mathcal{X}$ is a map $\phi : G \times \mathcal{X} \to \mathcal{X}$ satisfying three axioms: \emph{identity}, $\phi(e, x) = x$; \emph{compatibility}, $\phi(g_2, \phi(g_1, x)) = \phi(g_2 \circ g_1, x)$; and \emph{invertibility}, $\phi(g^{-1}, \phi(g, x)) = x$; we sometimes write $g \cdot x := \phi(g, x)$. A \emph{linear representation} of $G$ on a vector space $V$ is a homomorphism $\rho : G \to \mathrm{GL}(V)$ with $\rho(g_2 \circ g_1) = \rho(g_2)\rho(g_1)$ and $\rho(e) = I$.

\paragraph{Symmetric POMDPs.}
A POMDP is \emph{$G$-symmetric} if measurable group operations $\phi_s,\phi_a,\phi_o$ on $\mathcal S,\mathcal A,\mathcal O$, respectively, make its transition and observation kernels equivariant and its reward invariant: $P(g\cdot B_s\mid g\cdot s,g\cdot a) = P(B_s\mid s,a)$, $\Omega(g\cdot B_o\mid g\cdot s) = \Omega(B_o\mid s)$, and $R(g\cdot s,g\cdot a) = R(s,a)$, for all $g\in G$, $s\in\mathcal S$, $a\in\mathcal A$. Invariance of the offline transition-tuple distribution additionally depends on data collection.

\paragraph{Observable symmetry in POMDPs.}
Given group operations $\phi_o, \phi_a$ of $G$ on observations and actions, the induced transformation on one-step transition tuples $\tau = (o, a, o', r)$ is $\Phi(g, \tau) := \big(\phi_o(g, o),\ \phi_a(g, a),\ \phi_o(g, o'),\ r\big)$.
The POMDP has \emph{observable symmetry} under $G$ if the resulting transition-tuple distribution is invariant: $p(\tau) = p(\Phi(g, \tau))$ for all $g \in G$, where $p(\tau)$ is the distribution sampled from the offline dataset $\mathcal D$.

%% file: sections/method.tex
\section{Methodology}
\label{sec:method}

\begin{figure}[htbp]
  \centering
  \includegraphics[width=1.0\linewidth]{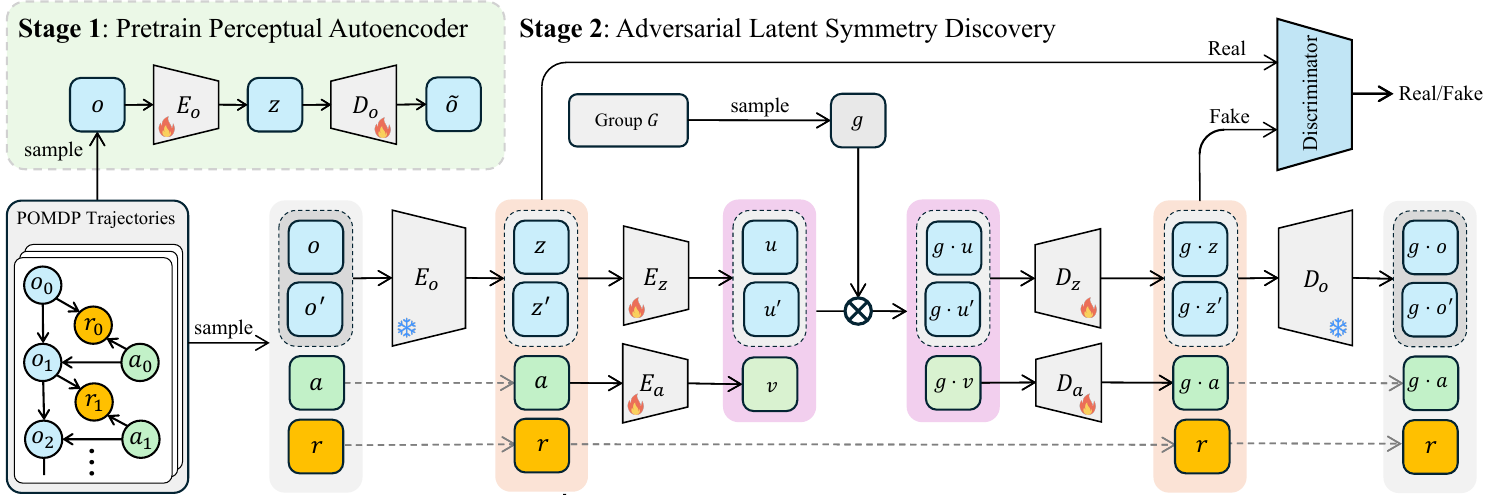}
  \caption{
    \textbf{Two-stage latent symmetry discovery.} Stage~1 trains a perceptual autoencoder $(E_o,D_o)$ to reconstruct observations and then freezes it. In Stage~2, separate symmetry modules encode the perceptual latents $(z,z')$ and action $a$ into symmetry spaces, apply fixed linear representations of the same sampled $g\in G$, and decode the transformed outputs. Adversarial training uses a tuple discriminator to match original and transformed latent transition-tuple distributions, with rewards unchanged. Group-consistency regularization penalizes violations of identity, compatibility, and invertibility.
  }
  \label{fig:method_pipeline}
\end{figure}

Given a known symmetry group $G$ and an offline dataset $\mathcal D=\{(o,a,o',r)\}$ from a POMDP, our goal is to learn coupled observation--action transformations $(\phi_o,\phi_a)$ that preserve the transition-tuple distribution and satisfy the group-operation axioms. Joint tuple matching encourages the transformed observations and actions to remain consistent with the recorded transitions and rewards. Figure~\ref{fig:method_pipeline} summarizes \method{}; Section~\ref{sec:param} describes the latent parameterization, and Section~\ref{sec:learning} presents the learning objective.

\subsection{Latent group-operation parameterization}
\label{sec:param}

\paragraph{Stage~1: Perceptual autoencoder.}
To reduce the cost of symmetry discovery from pixel observations, we pretrain a perceptual autoencoder $(E_o,D_o)$ on $\mathcal D$ for reconstruction and then freeze it (Appendix~\ref{appendix:implementation}). The encoder provides a compact perceptual latent $z=E_o(o)\in\mathbb R^{d_z}$ on which the subsequent symmetry modules operate~\citep{rombach2022high,van2017neural,razavi2019generating,esser2021taming}.
Appendix~\ref{appendix:prove} gives the conditions under which latent group operations preserve the group-operation axioms in observation space.

\paragraph{Stage~2: Encode--transform--decode template for group operations.}
Following~\citet{yang2023latent}, we parameterize observation and action group operations using separate encode--transform--decode modules. The observation symmetry module $(E_z,D_z)$ encodes the Stage~1 perceptual latent $z=E_o(o)$ into an observation symmetry latent $u=E_z(z)\in\mathbb R^{d_u}$, while the action symmetry module $(E_a,D_a)$ encodes $a$ into an action symmetry latent $v=E_a(a)\in\mathbb R^{d_v}$. The fixed linear representations $\rho_z(g)$ and $\rho_a(g)$ act on $u$ and $v$, respectively. Decoding the transformed codes gives $\phi_z(g,z):=D_z(\rho_z(g)u)$ and $\phi_a(g,a):=D_a(\rho_a(g)v)$, returning to the perceptual latent space and action space, respectively. The frozen perceptual decoder then induces the observation-space operation $\phi_o(g,o):=D_o(\phi_z(g,E_o(o)))$.

\subsection{Learning latent symmetry from offline transition tuples}
\label{sec:learning}

\paragraph{Adversarial tuple matching in latent space.}
We learn coupled observation--action transformations by matching the joint distribution of transition tuples. This encourages the transformed observations and actions to remain consistent with the recorded transitions and rewards. For each transition tuple $\tau=(o,a,o',r)$, we form the latent transition tuple $t=(z,a,z',r)$, where $z=E_o(o)$ and $z'=E_o(o')$. Using the same sampled $g\in G$, we transform both $z$ and $z'$ with $\phi_z$ and the action with $\phi_a$, leaving the reward unchanged. This defines the latent tuple map $\Phi_z(g,t):=(\phi_z(g,z),\phi_a(g,a),\phi_z(g,z'),r)$. A discriminator $f_\psi$ learns to distinguish original and transformed tuples, while the symmetry modules $\Theta=(E_z,D_z,E_a,D_a)$ learn to match their distributions through the following adversarial tuple-matching objective:
\begin{equation}
  \mathcal L_{\mathrm{adv}}(\Theta, \psi)\ :=\ \mathbb E_{t}\!\big[f_\psi(t)\big]\ -\ \mathbb E_{t,\,g}\!\big[f_\psi(\Phi_z(g, t))\big],
  \label{eq:adv_critic}
\end{equation}
optimized as $\min_\Theta \max_\psi \mathcal L_{\mathrm{adv}}$. Implementation details are provided in Appendix~\ref{appendix:implementation}.

\paragraph{Group-consistency regularization.}
Although the fixed linear representations satisfy the representation identities exactly, learned encoding and decoding can introduce consistency errors. We therefore use three regularizers. \emph{Identity} encourages the identity element to leave a tuple unchanged. \emph{Compatibility} encourages successive transformations to agree with their composition. \emph{Invertibility} encourages a transformation followed by its inverse to recover the original tuple. The corresponding losses are
\begin{equation}
  \begin{aligned}
    \mathcal L_{\mathrm{id}}   & = \mathbb E_{t}\!\left[\norm{t-\Phi_z(e,t)}_2^2\right],                                            \\[-3pt]
    \mathcal L_{\mathrm{comp}} & = \mathbb E_{t,g_1,g_2}\!\left[\norm{\Phi_z(g_2,\Phi_z(g_1,t))-\Phi_z(g_2\circ g_1,t)}_2^2\right], \\[-3pt]
    \mathcal L_{\mathrm{inv}}  & = \mathbb E_{t,g}\!\left[\norm{\Phi_z(g^{-1},\Phi_z(g,t))-t}_2^2\right].
  \end{aligned}
  \label{eq:axiom_losses}
\end{equation}
\paragraph{Discouraging trivial solutions.}
Distribution matching and group consistency also admit a trivial solution: every group element leaves all observations and actions unchanged. Following LaLiGAN~\citep{yang2023latent}, we discourage this solution with a nontriviality penalty $\mathcal L_{\mathrm{nt}}$ based on the cosine similarity between original and transformed perceptual latents.

\paragraph{Overall objective.}
We train the symmetry modules to match original and transformed latent transition-tuple distributions, satisfy the group-operation axioms, and avoid trivial identity transformations. These goals give the weighted objective:
\begin{equation}
  \mathcal L_{\mathrm{total}}(\Theta;\psi)
  \ :=\
  \underbrace{\lambda_{\mathrm{adv}}\mathcal L_{\mathrm{adv}}}_{\text{adversarial tuple matching}}
  +\underbrace{\lambda_{\mathrm{id}}\mathcal L_{\mathrm{id}}+\lambda_{\mathrm{comp}}\mathcal L_{\mathrm{comp}}+\lambda_{\mathrm{inv}}\mathcal L_{\mathrm{inv}}}_{\text{group consistency}}
  +\underbrace{\lambda_{\mathrm{nt}}\mathcal L_{\mathrm{nt}}}_{\text{nontriviality}}.
  \label{eq:full_obj}
\end{equation}
We alternate discriminator and symmetry-module updates (Appendix~\ref{appendix:implementation}). After training, we obtain the coupled observation--action transformations $\phi_o$ and $\phi_a$.

%% file: sections/experiments.tex
\section{Experiments}
\label{sec:experiments}
\label{sec:exp_setup}
\begin{wrapfigure}{R}{0.30\textwidth}
  \centering
  \vspace{-0.5em}
  \includegraphics[width=\linewidth]{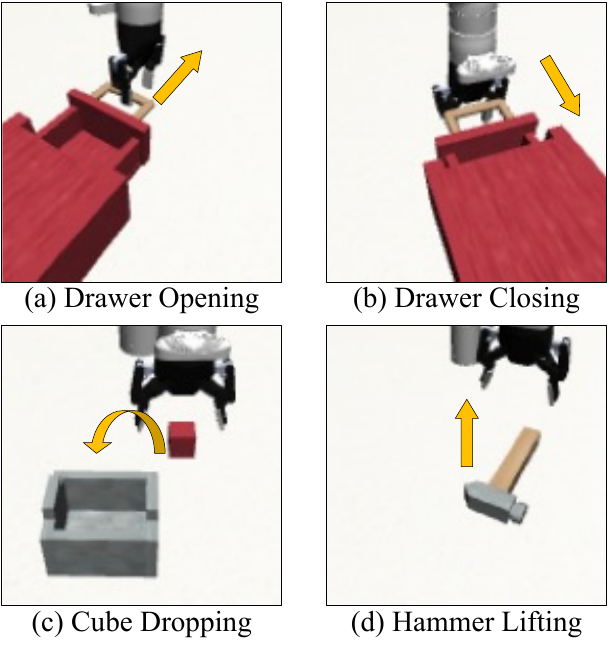}
  \caption{\textbf{Simulated manipulation tasks.} The four Robosuite tasks use RGB observations from a fixed camera. Arrows indicate the intended object motion.}
  \label{fig:env_figure_view}
  \vspace{-2em}
\end{wrapfigure}
We first visualize the learned transformations, then compare symmetry quality against baselines and ablations, and finally evaluate downstream policy learning. We study planar rotational symmetry ($G=SO(2)$) on four custom Robosuite~\citep{robosuite2020} tasks: \textbf{Drawer Opening}, \textbf{Drawer Closing}, \textbf{Cube Dropping}, and \textbf{Hammer Lifting} (Figure~\ref{fig:env_figure_view}).

For each task, we collect $100{,}000$ training and $10{,}000$ evaluation transition tuples using a finite-state-machine controller. Observations are RGB images from a fixed camera, and actions are end-effector commands normalized to $[-1,1]$. Action coordinates, rewards, and data collection details are given in Appendix~\ref{appendix:env_description}.

\subsection{What does \method{} discover?}
\label{sec:exp_q1}

\method{} learns how to change a scene and its associated robot command together. In Figure~\ref{fig:env_symmetry_vis}, the drawer's appearance changes jointly with the planar motion command $(\Delta x,\Delta y)$, while vertical translation $\Delta z$ and yaw $\Delta\psi$ remain approximately unchanged. This illustrates the relationship motivated in the introduction: changing the task configuration also requires changing the robot's motion.

\begin{figure}[htbp]
  \includegraphics[width=0.97\linewidth]{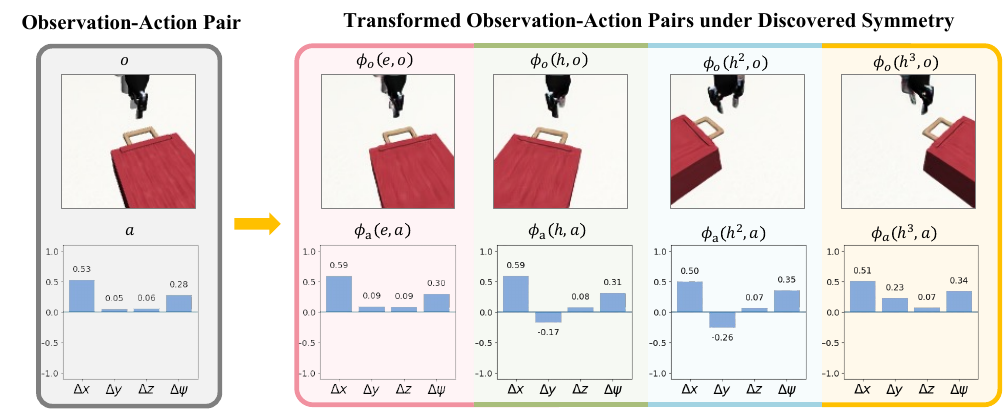}
  \caption{
    \textbf{\method{} learns coupled changes to the scene and robot command.}
    In this Drawer Opening example, the original pair $(o,a)$ is shown on the left, followed by four learned pairs $(\phi_o(g,o),\phi_a(g,a))$ for $g\in C_4=\{e,h,h^2,h^3\}$.
    The images show the scene configurations, and the bars show the corresponding normalized commands on a $[-1,1]$ scale.
    The columns use latent rotation angles $0^\circ,90^\circ,180^\circ,270^\circ$; the decoded transformations need not be geometric rotations by these angles.
  }
  \label{fig:env_symmetry_vis}
\end{figure}

The image and command transformations are learned from offline interaction tuples for a specified group. Similar coupling appears across all four tasks (Appendix~\ref{appendix:qualitative_all_envs}). We next evaluate whether these transformations preserve the data distribution and satisfy the group-operation axioms.

\subsection{How accurately does \method{} recover observation--action symmetries?}
\label{sec:exp_q2}
\label{sec:exp_ablation}

\paragraph{Symmetry-quality metrics.}
We evaluate the learned transformations with four complementary checks. \textbf{Symmetry invariance error} measures how closely transformed transition tuples match the original data distribution. \textbf{Identity error} checks that applying the identity leaves the input unchanged. \textbf{Compatibility error} measures composition consistency: applying two transformations successively should agree with applying their combination in one step. \textbf{Invertibility error} checks that a transformation followed by its inverse recovers the input. Lower is better for all four metrics; full definitions are in Appendix~\ref{sec:eval}.

\paragraph{Baselines and ablations.}
We adapt two GAN-based baselines to the same tuple formulation and architecture.
Within this shared setting, \textbf{SymGAN} uses adversarial tuple matching~\citep{desai2022symmetry,yang2023liegan,liu2025physically}, while \textbf{LaLiGAN} additionally regularizes identity consistency~\citep{yang2023latent}.
Neither adapted baseline includes compatibility or invertibility regularization.
To assess the contribution of each loss term, we remove the adversarial tuple-matching, identity, compatibility, or invertibility term from \method{} one at a time.
We use three training seeds per task and the evaluation angles specified in Appendix~\ref{sec:eval}. Figure~\ref{fig:symmetry_error_summary} summarizes all four tasks; per-task results are in Appendix~\ref{appendix:additional_env_results}.

\begin{figure}[htbp]
  \centering
  \begin{minipage}{0.25\linewidth}
    \centering
    \includegraphics[width=\linewidth]{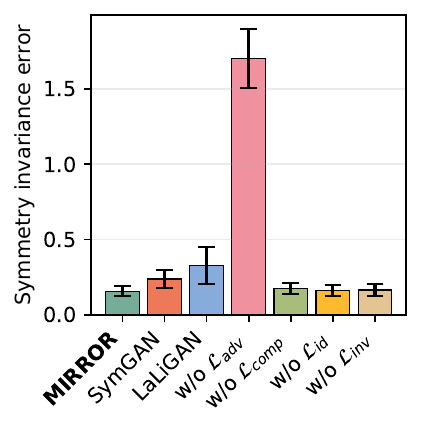}\\
    \vspace{-0.5em}
    \small (a) Symmetry invariance
  \end{minipage}\hfill
  \begin{minipage}{0.25\linewidth}
    \centering
    \includegraphics[width=\linewidth]{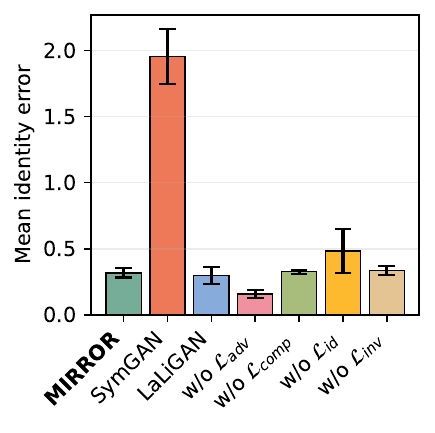}\\
    \vspace{-0.5em}
    \small (b) Identity
  \end{minipage}\hfill
  \begin{minipage}{0.25\linewidth}
    \centering
    \includegraphics[width=\linewidth]{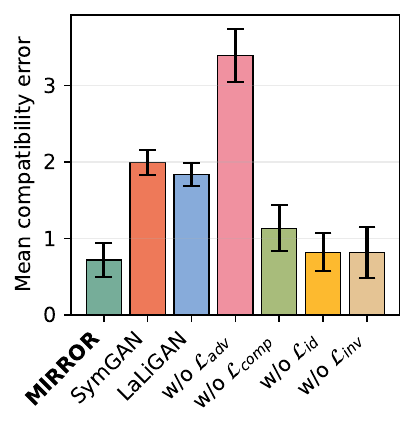}\\
    \vspace{-0.5em}
    \small (c) Compatibility
  \end{minipage}\hfill
  \begin{minipage}{0.25\linewidth}
    \centering
    \includegraphics[width=\linewidth]{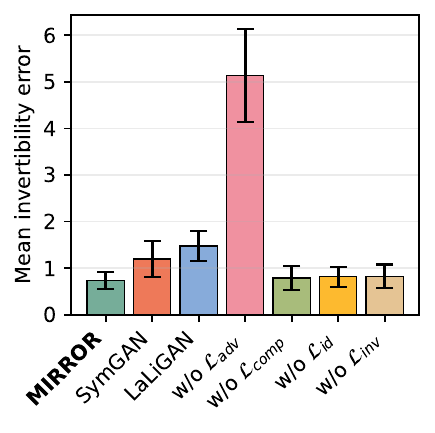}\\
    \vspace{-0.5em}
    \small (d) Invertibility
  \end{minipage}
  \caption{
    \textbf{Symmetry quality across four manipulation tasks.}
    Bars show mean errors and error bars indicate $\pm 1$ standard deviation across twelve task--seed results (four tasks, three seeds each), after averaging angle-dependent errors within each run.
    The ``w/o'' variants remove one loss term from \method{}. Lower is better for all metrics.
    \method{} has the lowest aggregate symmetry invariance, compatibility, and invertibility errors.
  }
  \label{fig:symmetry_error_summary}
\end{figure}

\paragraph{Distribution preservation.}
\method{} achieves the lowest aggregate symmetry invariance error (Figure~\ref{fig:symmetry_error_summary}(a); Table~\ref{tab:symmetry_quality}), supported qualitatively by increasing overlap of original and transformed latent data during training (Appendix~\ref{appendix:qualitative_all_envs}). Removing adversarial tuple matching raises this error from $0.157$ to $1.701$: group-consistency regularization alone does not preserve the interaction-data distribution in our experiments. This ablation also produces the largest aggregate compatibility and invertibility errors, despite lower identity error.

\begin{table}[htbp]
  \centering
  \caption{\textbf{Symmetry-quality comparison and loss ablations} across four tasks (mean $\pm$ standard deviation over twelve task--seed results; lower is better). Rows below the divider remove one loss term from Eq.~\eqref{eq:full_obj} relative to \method{}. Bold indicates the lowest error in each column across all rows.
  }
  \label{tab:symmetry_quality}
  \setlength{\tabcolsep}{6pt}
  \renewcommand{\arraystretch}{1.1}
  \resizebox{0.8\columnwidth}{!}{%
    \begin{tabular}{l|cccc}
      \toprule
      Method                            & Symmetry invariance        & Identity                   & Compatibility              & Invertibility              \\
      \midrule
      \method{} (ours)                  & $\mathbf{0.157 \pm 0.035}$ & $0.317 \pm 0.034$          & $\mathbf{0.717 \pm 0.221}$ & $\mathbf{0.730 \pm 0.183}$ \\
      SymGAN                            & $0.237 \pm 0.061$          & $1.954 \pm 0.206$          & $1.994 \pm 0.167$          & $1.198 \pm 0.389$          \\
      LaLiGAN                           & $0.329 \pm 0.122$          & $0.296 \pm 0.066$          & $1.838 \pm 0.155$          & $1.476 \pm 0.314$          \\
      \midrule
      w/o $\mathcal{L}_{\mathrm{adv}}$  & $1.701 \pm 0.195$          & $\mathbf{0.156 \pm 0.031}$ & $3.394 \pm 0.344$          & $5.131 \pm 0.995$          \\
      w/o $\mathcal{L}_{\mathrm{id}}$   & $0.162 \pm 0.037$          & $0.483 \pm 0.166$          & $0.818 \pm 0.251$          & $0.814 \pm 0.211$          \\
      w/o $\mathcal{L}_{\mathrm{comp}}$ & $0.174 \pm 0.036$          & $0.325 \pm 0.016$          & $1.134 \pm 0.301$          & $0.788 \pm 0.250$          \\
      w/o $\mathcal{L}_{\mathrm{inv}}$  & $0.164 \pm 0.038$          & $0.337 \pm 0.036$          & $0.813 \pm 0.333$          & $0.826 \pm 0.253$          \\
      \bottomrule
    \end{tabular}%
  }
\end{table}

\paragraph{Group consistency.}
Both GAN baselines exhibit substantially higher composition errors: aggregate compatibility error is $1.994$ for SymGAN and $1.838$ for LaLiGAN, versus $0.717$ for \method{}---a reduction of at least $61\%$ (Figure~\ref{fig:symmetry_error_summary}(c); Table~\ref{tab:symmetry_quality}). LaLiGAN's slightly lower identity error does not translate into consistent composition: reconstructing the input under the identity is insufficient to ensure that different transformations work together. The full model also reduces aggregate invertibility error by at least $39\%$ relative to either GAN baseline (Figure~\ref{fig:symmetry_error_summary}(d)).

The loss ablations reveal an asymmetry among the group-consistency regularizers. Removing $\mathcal L_{\mathrm{comp}}$, while retaining identity and invertibility regularization, increases aggregate compatibility error from $0.717$ to $1.134$. Removing identity or invertibility regularization produces smaller increases, to $0.818$ and $0.813$, respectively (Table~\ref{tab:symmetry_quality}).

\begin{figure}[!b]
  \centering
  \includegraphics[width=0.95\linewidth]{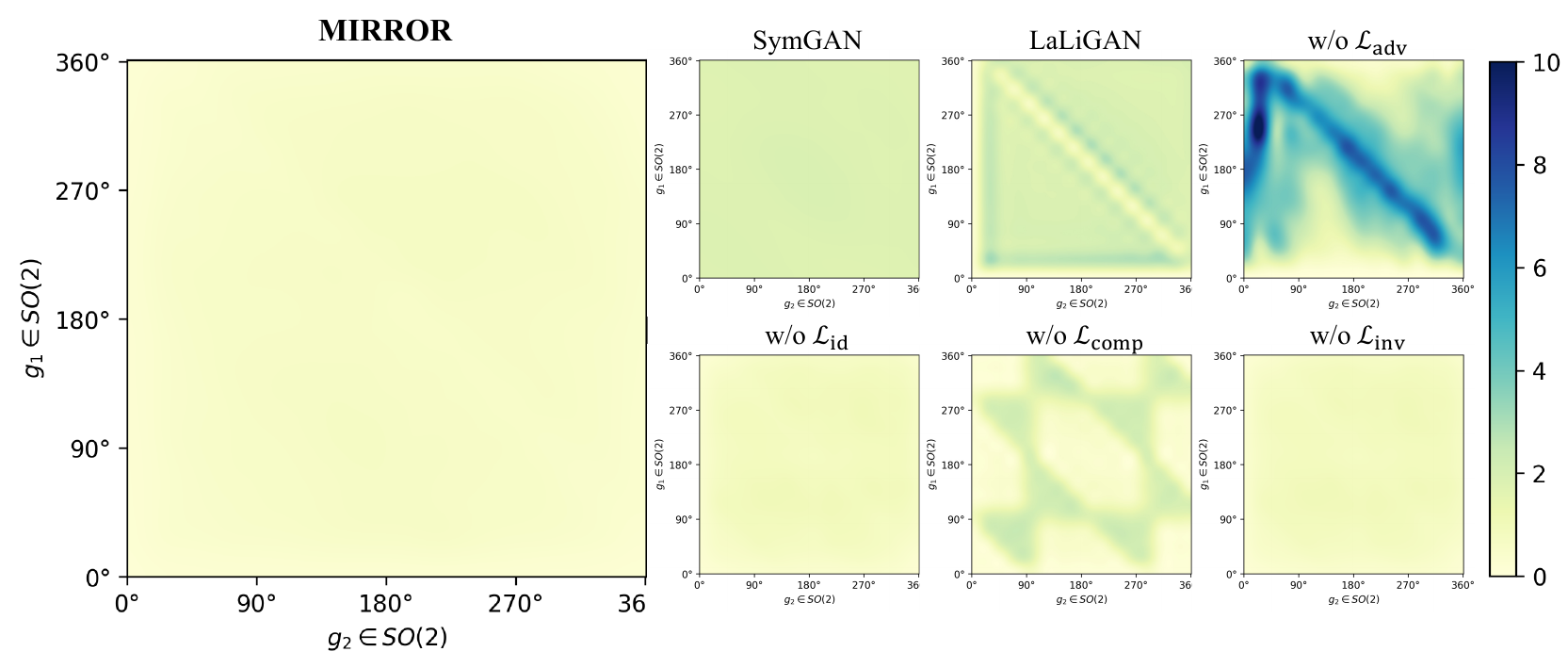}
  \caption{
    \textbf{Compatibility error heatmaps on Drawer Opening.}
    Errors compare successive transformations with their composition at all $256$ ordered pairs of $16$ equally spaced angles; heatmaps are smoothed for visualization.
    \method{} (large left panel), baselines, and single-loss ablations share a color scale (lower is better).
    \method{} maintains low error across the evaluated pairs.
  }
  \label{fig:comp_grid_drawer_open}
\end{figure}

\paragraph{Why explicitly regularize composition?}
This asymmetry is consistent with the logical dependence among the underlying axioms. Appendix~\ref{appendix:axiom_redundancy} proves that, when satisfied exactly, composition consistency together with either identity or invertibility implies the remaining axiom. In contrast, identity and invertibility together do not imply composition consistency.

Composition therefore supplies a constraint on how different transformations combine that reconstruction and inverse checks alone cannot ensure. Figure~\ref{fig:comp_grid_drawer_open} illustrates this distinction on Drawer Opening. \method{} maintains low compatibility error across the evaluated angle pairs, while removing $\mathcal L_{\mathrm{comp}}$ introduces regions of elevated error despite retaining identity and invertibility regularization. Removing $\mathcal L_{\mathrm{id}}$ or $\mathcal L_{\mathrm{inv}}$ produces smaller changes across the grid. These results support explicitly regularizing composition. The theoretical result concerns exact constraints, so it does not imply that identity and invertibility penalties are interchangeable during training.

\FloatBarrier
\subsection{Do the discovered symmetries improve robot policy learning?}
\label{sec:application}

Having evaluated symmetry quality, we now test whether the learned transformations benefit policy learning. We use frame-averaged BC~\citep{puny2021frame}: predict actions from transformed observations, map them back to the original action frame, and average them. Experimental method names use the prefix EBC (equivariant behavior cloning), giving EBC-\method{} for our transformations.
The policy averages over $n$ transformations from the cyclic subgroup $H=C_n\subset SO(2)$; construction, training details, and conditions for exact equivariance are given in Appendix~\ref{appendix:equivariant_bc}.

EBC-\method{} improves mean success on all four tasks, raising the average from $80.5\%$ for vanilla BC to $85.1\%$ at $H=C_{16}$ (Table~\ref{tab:ebc_results_main}).
The largest gain is on \textbf{Cube Dropping} ($+9.7$ percentage points).
Frame-averaged BC policies using hand-designed (EBC-ExSym~\citep{wang2023surprising}) or GAN-discovered (EBC-SymGAN and EBC-LaLiGAN) symmetries all have lower average success than vanilla BC.

\begin{table}[htbp]
  \centering
  \caption{Episodic success rates in percent (mean $\pm$ std) at $H=C_{16}$, with vanilla BC as reference. \textbf{Bold} indicates the best per column. Results for smaller subgroups ($C_2, C_4, C_8$) are in Appendix~\ref{appendix:ebc_subgroup_results}.}
  \label{tab:ebc_results_main}
  \begin{adjustbox}{width=1.0\linewidth}
    \begin{tabular}{l|cccc|c}
      \toprule
      Method                   & \textbf{Drawer Opening} & \textbf{Drawer Closing} & \textbf{Cube Dropping}  & \textbf{Hammer Lifting} & \textbf{Average}        \\
      \midrule
      EBC-ExSym ($C_{16}$)     & $64.7 \pm 2.2$          & $86.4 \pm 4.7$          & $80.9 \pm 5.9$          & $85.1 \pm 5.7$          & $79.3 \pm 9.9$          \\
      EBC-SymGAN ($C_{16}$)    & $66.3 \pm 11.3$         & $85.8 \pm 3.6$          & $81.9 \pm 9.4$          & $49.8 \pm 16.3$         & $71.0 \pm 18.1$         \\
      EBC-LaLiGAN ($C_{16}$)   & $65.3 \pm 2.0$          & $73.6 \pm 6.2$          & $84.5 \pm 6.2$          & $61.3 \pm 5.6$          & $71.2 \pm 10.3$         \\
      EBC-\method{} ($C_{16}$) & $\mathbf{71.7 \pm 1.4}$ & $\mathbf{89.8 \pm 2.0}$ & $\mathbf{90.3 \pm 2.9}$ & $\mathbf{88.7 \pm 2.5}$ & $\mathbf{85.1 \pm 8.1}$ \\
      \midrule
      BC                       & $71.4 \pm 0.8$          & $88.1 \pm 5.9$          & $80.6 \pm 2.5$          & $81.8 \pm 3.0$          & $80.5 \pm 7.0$          \\
      \bottomrule
    \end{tabular}
  \end{adjustbox}
\end{table}

\paragraph{Effect of subgroup size.}
Averaging over more transformations steadily improves EBC-\method{}'s mean success rate across the four tasks, whereas EBC-SymGAN and EBC-LaLiGAN decline overall (Figure~\ref{fig:subgroup_size_rgb}).
EBC-ExSym remains below vanilla BC throughout the sweep.
These trends show that the benefit of frame averaging depends on the quality of the underlying symmetries.

\begin{figure}[htbp]
  \centering
  \includegraphics[width=0.6\linewidth]{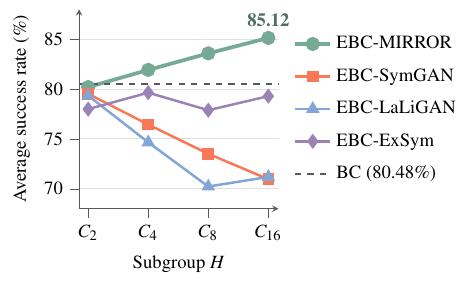}
  \caption{\textbf{RGB policy performance versus subgroup size.}
    Points show mean episodic success rates across four tasks; higher is better.
    For $H=C_n$, the policy uses $n$ symmetry transformations for frame averaging. The dashed line denotes vanilla BC.
    EBC-\method{} improves with subgroup size and exceeds BC at $C_4$, $C_8$, and $C_{16}$; the other frame-averaged variants remain below BC.
    Per-task results and standard deviations are reported in Tables~\ref{tab:ebc_results_main} and~\ref{tab:ebc_results}.}
  \label{fig:subgroup_size_rgb}
\end{figure}

%% file: sections/sim2real.tex
\clearpage
\section{Sim-to-Real Deployment}
\label{sec:sim2real}

To test transfer beyond RGB, we use two-channel observations comprising a depth-derived heightmap and a target-object segmentation mask (Figure~\ref{fig:real_tasks_obs}(e,f)). We rerun symmetry discovery in simulation, freeze the perceptual autoencoder and symmetry modules, and train EBC-\method{} on this modality (Appendix~\ref{appendix:sim2real_success_rate}) before \emph{zero-shot} deployment on a Kinova GEN3 arm. Appendix~\ref{appendix:sim2real_symmetry_vis} visualizes the coupled observation--action transformations.

\begin{figure}[htbp]
  \centering
  \includegraphics[width=0.75\linewidth]{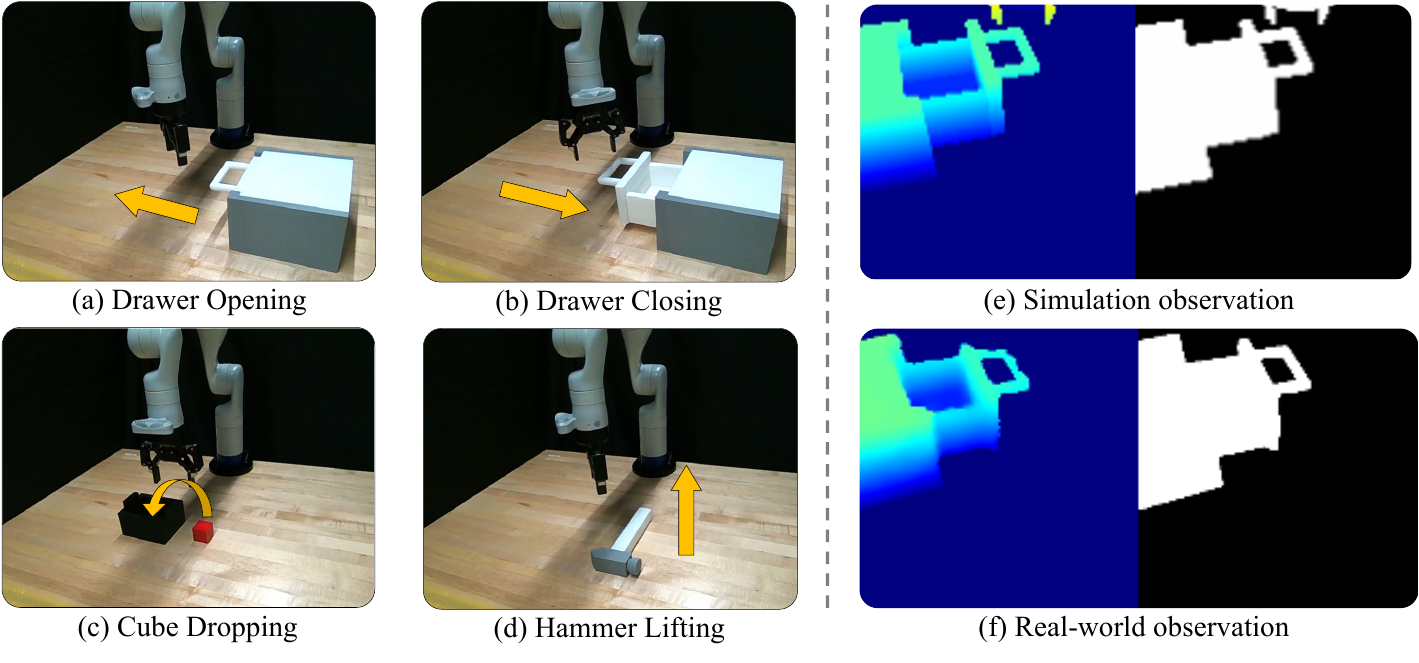}
  \caption{\textbf{Real-world tasks and policy observations.}
    (a)--(d) Arrows indicate intended object motion.
    (e,f) Simulation and real-world observations: heightmap (left) and segmentation mask (right).}
  \label{fig:real_tasks_obs}
\end{figure}

Over 20 trials per task, the policy succeeds on 13/20 Drawer Opening, 19/20 Drawer Closing, 9/20 Cube Dropping, and 16/20 Hammer Lifting, transferring on three of four tasks with Cube Dropping remaining the most challenging. Hardware and preprocessing details are in Appendix~\ref{appendix:real_env_description}.
\FloatBarrier

%% file: sections/conclusion.tex
\section{Conclusion}

We introduced \textbf{\method{}}, a framework for learning how a specified symmetry group operates jointly on observations and actions in pixel-based POMDPs from offline interaction data. We formulate observable symmetry as invariance of the offline one-step transition-tuple distribution. Adversarial tuple matching learns coupled transformations, while identity, compatibility, and invertibility regularization encourages consistency with the group-operation axioms. Across four Robosuite manipulation tasks, \method{} achieves the lowest aggregate error on three of four symmetry-quality metrics, including a $61\%$ reduction in compatibility error relative to the adapted LaLiGAN baseline.

Using these transformations in frame-averaged BC yields higher mean success rates on all four tasks than vanilla BC and the hand-designed and GAN-discovered symmetry baselines at $H=C_{16}$. Gains over vanilla BC reach $9.66$ percentage points on Cube Dropping and $4.64$ percentage points on average. We also repeat symmetry discovery and policy training in simulation with heightmap and segmentation-mask observations, then deploy the resulting policy zero-shot on a Kinova GEN3 arm. Together, these results show that learning coupled observation--action transformations from data can improve visual policy learning without manually specifying how a known symmetry group operates on images and robot commands.

%% file: sections/limitations.tex
\section{Limitations}

\method{} has two scope boundaries worth naming. First, the framework discovers how a group operates on observations and actions but requires the group $G$ itself to be specified in advance; we view this as a mild assumption in practice, since the relevant group is usually evident from task geometry while its observation--action transformations are what is genuinely hard to specify by hand. Second, \method{} can only recover symmetries that are already present in the data: if the dataset lacks variation along the symmetry (for example, a fixed scene orientation across all episodes), the discriminator has nothing to match against, and $(\phi_o, \phi_a)$ may collapse toward the identity.

%% file: appendix/overview.tex
\section*{Appendix Overview}
\begin{itemize}[leftmargin=1.5em]
  \item \hyperref[appendix:theory]{\textbf{Appendix~\ref*{appendix:theory}: Theoretical Analysis}}
  \item \textbf{\hyperref[appendix:env_description]{Appendix~\ref*{appendix:env_description}: Simulation Setup} and \hyperref[sec:eval]{Evaluation Protocol}}
  \item \hyperref[appendix:implementation]{\textbf{Appendix~\ref*{appendix:implementation}: Symmetry-Discovery Implementation}}
  \item \hyperref[appendix:additional_env_results]{\textbf{Appendix~\ref*{appendix:additional_env_results}: Additional Symmetry-Discovery Results}}
  \item \hyperref[appendix:equivariant_bc]{\textbf{Appendix~\ref*{appendix:equivariant_bc}: Frame-Averaged Behavior Cloning}}
  \item \hyperref[appendix:sim2real]{\textbf{Appendix~\ref*{appendix:sim2real}: Sim-to-Real Experiments}}
\end{itemize}

%% file: appendix/prove.tex
\section{Theoretical Analysis}
\label{appendix:theory}

\subsection{Preservation of Group-Operation Axioms under Two-Stage Encoding}
\label{appendix:prove}

We use a two-stage observation encoding scheme: a fixed perceptual autoencoder
$(E_o, D_o)$ maps observations to a compact perceptual latent space,
and the learnable symmetry modules operate on this latent space.
This appendix establishes exact preservation of the group-operation axioms when the latent operation satisfies them and the perceptual encoder and decoder are mutual inverses on the relevant invariant support.

\paragraph{Setup.}
Let $G$ be a group with identity element $e$ and composition $\circ$.
Let $E_o:\mathcal O\to\mathcal Z$ and $D_o:\mathcal Z\to\mathcal O$
be the fixed perceptual encoder/decoder, and let $\phi_z:G\times\mathcal Z\to\mathcal Z$
be the learned transformation on perceptual latents.
We define the induced observation-space transformation by
\begin{equation}
  \phi_o(g,o)\ :=\ D_o\!\big(\phi_z(g, E_o(o))\big).
  \label{eq:phi_o_two_stage_app}
\end{equation}

\begin{samepage}
  \begin{lemma}[Conjugation preserves group operations]
    \label{lem:conjugation_group_operation}
    Let $\mathcal O_\ast\subseteq\mathcal O$ be a relevant observation support and define $\mathcal Z_\ast=E_o(\mathcal O_\ast)$.
    Assume that $\mathcal Z_\ast$ is $G$-invariant under $\phi_z$, meaning $\phi_z(g,z)\in\mathcal Z_\ast$ for every $g\in G$ and $z\in\mathcal Z_\ast$.
    Assume that $\phi_z$ is a (left) group operation on $\mathcal Z_\ast$, i.e., for every $z\in\mathcal Z_\ast$,
    \begin{align}
      \phi_z(e,z)               & = z, \label{eq:group_axiom_id_z}                                                     \\
      \phi_z(g_2,\phi_z(g_1,z)) & = \phi_z(g_2\circ g_1, z), \quad \forall g_1,g_2\in G. \label{eq:group_axiom_comp_z}
    \end{align}
    Further assume that $(E_o,D_o)$ are mutual inverses on these supports, meaning
    \begin{equation}
      D_o(E_o(o)) = o\quad(o\in\mathcal O_\ast),
      \qquad
      E_o(D_o(z)) = z\quad(z\in\mathcal Z_\ast).
      \label{eq:ae_inverse_assumption}
    \end{equation}
    Then the induced map $\phi_o$ in \eqref{eq:phi_o_two_stage_app} is a group operation on $\mathcal O_\ast$: for every $o\in\mathcal O_\ast$,
    \begin{align}
      \phi_o(e,o)               & = o, \label{eq:group_axiom_id_o}                                                     \\
      \phi_o(g_2,\phi_o(g_1,o)) & = \phi_o(g_2\circ g_1, o), \quad \forall g_1,g_2\in G. \label{eq:group_axiom_comp_o}
    \end{align}
  \end{lemma}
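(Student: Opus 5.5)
The plan is a direct verification of the two axioms, \eqref{eq:group_axiom_id_o} and \eqref{eq:group_axiom_comp_o}, by unfolding the definition \eqref{eq:phi_o_two_stage_app} and cancelling $E_o\circ D_o$ in the middle via \eqref{eq:ae_inverse_assumption}. The one point needing care is that each cancellation is applied only to points already known to lie in $\mathcal Z_\ast$. The $G$-invariance hypothesis supplies exactly this.

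\emph{Preliminary step: $\phi_o$ maps $\mathcal O_\ast$ into itself.} I will first check that $\phi_o(g,\cdot)$ is defined on $\mathcal O_\ast$ with values in $\mathcal O_\ast$; otherwise the composite in \eqref{eq:group_axiom_comp_o} would not be covered by the hypotheses. Fix $o\in\mathcal O_\ast$ and put $z=E_o(o)$, so $z\in\mathcal Z_\ast$ by definition of $\mathcal Z_\ast$. By invariance, $w:=\phi_z(g,z)\in\mathcal Z_\ast$. Since $\mathcal Z_\ast=E_o(\mathcal O_\ast)$, we can write $w=E_o(o'')$ for some $o''\in\mathcal O_\ast$. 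Then
\[
D_o(w)=D_o(E_o(o''))=o''\in\mathcal O_\ast
\]
by the first half of \eqref{eq:ae_inverse_assumption}. Hence $\phi_o(g,o)\in\mathcal O_\ast$, and moreover $E_o(\phi_o(g,o))=E_o(D_o(w))=w$ by the second half of \eqref{eq:ae_inverse_assumption}. I record this as the key identity
\[
E_o\big(\phi_o(g,o)\big)=\phi_z\big(g,E_o(o)\big)\qquad (o\in\mathcal O_\ast,\ g\in G).
\]

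\emph{Identity axiom.} Here $\phi_o(e,o)=D_o(\phi_z(e,E_o(o)))=D_o(E_o(o))=o$, using \eqref{eq:group_axiom_id_z} at $E_o(o)\in\mathcal Z_\ast$ and then \eqref{eq:ae_inverse_assumption}.

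\emph{Compatibility axiom.} By the preliminary step, $\phi_o(g_1,o)\in\mathcal O_\ast$, so the outer $\phi_o(g_2,\cdot)$ is applied within the support. Unfolding and applying the key identity gives
\[
\phi_o\big(g_2,\phi_o(g_1,o)\big)=D_o\big(\phi_z(g_2,E_o(\phi_o(g_1,o)))\big)=D_o\big(\phi_z(g_2,\phi_z(g_1,E_o(o)))\big).
\]
By \eqref{eq:group_axiom_comp_z} at $E_o(o)\in\mathcal Z_\ast$, the right-hand side equals $D_o(\phi_z(g_2\circ g_1,E_o(o)))=\phi_o(g_2\circ g_1,o)$.

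If desired, invertibility follows immediately by taking $g_2=g^{-1}$, $g_1=g$ and using the identity axiom. The main obstacle is the preliminary step, specifically establishing that $D_o$ sends $\mathcal Z_\ast$ back into $\mathcal O_\ast$. This uses surjectivity of $E_o$ onto $\mathcal Z_\ast$ together with the left-inverse property; everything else is routine substitution.
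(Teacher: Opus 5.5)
Your proposal is correct and follows the paper's proof essentially step for step. Both arguments first establish closure of $\phi_o$ on $\mathcal O_\ast$ via $G$-invariance of $\mathcal Z_\ast$, then check identity by unfolding the definition, get compatibility by cancelling $E_o\circ D_o$ at $\phi_z(g_1,E_o(o))\in\mathcal Z_\ast$ before applying latent compatibility, and derive invertibility as a corollary; the only difference is that the paper simply asserts $D_o(\mathcal Z_\ast)=\mathcal O_\ast$, whereas you derive the needed inclusion from $\mathcal Z_\ast=E_o(\mathcal O_\ast)$ and the left-inverse property.
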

\end{samepage}

\begin{proof}
  Fix $o\in\mathcal O_\ast$. Since $D_o(\mathcal Z_\ast)=\mathcal O_\ast$ and $\mathcal Z_\ast$ is $G$-invariant, $\phi_o(g,o)\in\mathcal O_\ast$ for every $g\in G$, so the induced operation is closed on the stated support.

  \emph{Identity.} Using \eqref{eq:phi_o_two_stage_app}, \eqref{eq:group_axiom_id_z}, and \eqref{eq:ae_inverse_assumption},
  \[
    \phi_o(e,o)
    = D_o\!\big(\phi_z(e,E_o(o))\big)
    = D_o\!\big(E_o(o)\big)
    = o.
  \]

  \emph{Compatibility.} Using \eqref{eq:phi_o_two_stage_app} twice, then \eqref{eq:ae_inverse_assumption} to cancel $E_o\circ D_o$,
  and finally \eqref{eq:group_axiom_comp_z},
  \begin{align*}
    \phi_o(g_2,\phi_o(g_1,o))
     & = D_o\!\Big(\phi_z\big(g_2,\ E_o(\phi_o(g_1,o))\big)\Big)           \\
     & = D_o\!\Big(\phi_z\big(g_2,\ E_o(D_o(\phi_z(g_1,E_o(o))))\big)\Big) \\
     & = D_o\!\Big(\phi_z\big(g_2,\ \phi_z(g_1,E_o(o))\big)\Big)           \\
     & = D_o\!\Big(\phi_z\big(g_2\circ g_1,\ E_o(o)\big)\Big)              \\
     & = \phi_o(g_2\circ g_1,o).
  \end{align*}
  This proves \eqref{eq:group_axiom_id_o}--\eqref{eq:group_axiom_comp_o}.
  Invertibility follows by taking $g_2=g_1^{-1}$ in the compatibility identity and applying the identity axiom.
\end{proof}

\paragraph{Interpretation.}
Under the lemma's assumptions, the perceptual autoencoder is a change of coordinates between $\mathcal O_\ast$ and $\mathcal Z_\ast$.
An exact latent group operation therefore induces an exact observation-space group operation on the corresponding support.
The conclusion depends on the stated inverse and invariance conditions; pretraining and freezing the autoencoder alone does not guarantee them.

\paragraph{Remark (learned approximations).}
The lemma establishes exact preservation under its stated assumptions.
In practice, the perceptual autoencoder is approximately invertible, and the learned latent operation is encouraged to satisfy the group-operation axioms through regularization.
Only the fixed linear representations $\rho_z$ and $\rho_a$ satisfy the representation identities exactly by construction.
Consequently, the learned observation-space operation may exhibit residual axiom violations, which we quantify using the evaluation metrics in Appendix~\ref{sec:eval}.

\subsection{Logical Dependence among the Group-Operation Axioms}
\label{appendix:axiom_redundancy}

The \method{} objective uses three group-consistency regularizers: identity ($\mathcal{L}_{\mathrm{id}}$), compatibility ($\mathcal{L}_{\mathrm{comp}}$), and invertibility ($\mathcal{L}_{\mathrm{inv}}$) in Eq.~\eqref{eq:axiom_losses}.
This section analyzes the logical dependence among the corresponding exact constraints: given compatibility, identity and invertibility imply one another, whereas identity and invertibility together do not imply compatibility.

\paragraph{Setup.}
Let $G$ be a group with identity element $e$ and composition $\circ$, and let $\phi:G\times\mathcal{X}\to\mathcal{X}$ denote a candidate group operation on a set $\mathcal{X}$.
The analysis applies to both $\Phi$ and $\Phi_z$ on their respective tuple spaces. We consider the following axioms, each required to hold exactly for every $g,g_1,g_2\in G$ and $x\in\mathcal{X}$:
\begin{align}
  \text{(Identity)} \qquad      & \phi(e,x) = x, \label{eq:axiom_id}                                  \\
  \text{(Compatibility)} \qquad & \phi(g_2,\phi(g_1,x)) = \phi(g_2\circ g_1,x), \label{eq:axiom_comp} \\
  \text{(Invertibility)} \qquad & \phi(g^{-1},\phi(g,x)) = x. \label{eq:axiom_inv}
\end{align}

\begin{lemma}[Identity from compatibility and invertibility]
  \label{lem:id_from_comp_inv}
  If $\phi$ satisfies compatibility \eqref{eq:axiom_comp} and invertibility \eqref{eq:axiom_inv}, then $\phi$ satisfies identity \eqref{eq:axiom_id}.
\end{lemma}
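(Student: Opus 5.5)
The idea is to use invertibility to show that $\phi(e,\cdot)$ is surjective, and then use compatibility to show that it is idempotent. An idempotent map that is onto must be the identity.

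\emph{Step 1: surjectivity.} Fix $x\in\mathcal X$. Applying invertibility \eqref{eq:axiom_inv} with $g=e$, and using $e^{-1}=e$, gives
\[
  \phi(e,\phi(e,x)) = x .
\]
So every $x$ lies in the image of $\phi(e,\cdot)$; concretely, $x=\phi(e,y)$ with $y:=\phi(e,x)$.

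\emph{Step 2: idempotence.} Compatibility \eqref{eq:axiom_comp} with $g_1=g_2=e$, together with $e\circ e=e$, gives
\[
  \phi(e,\phi(e,y))=\phi(e,y) \qquad \text{for every } y\in\mathcal X .
\]

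\emph{Step 3: combine.} Apply Step 2 to the particular $y=\phi(e,x)$ from Step 1:
\[
  \phi(e,x)=\phi(e,y)=\phi(e,\phi(e,y))=\phi(e,\phi(e,\phi(e,x))).
\]
By Step 1, the innermost $\phi(e,\phi(e,x))$ equals $x$, so the right-hand side is $\phi(e,x)$ again. This chain is circular and proves nothing.

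A cleaner combination compares the two displayed identities directly. Step 2 applied at $y=x$ gives $\phi(e,\phi(e,x))=\phi(e,x)$. Step 1 gives $\phi(e,\phi(e,x))=x$. Since both left-hand sides are the same expression, $\phi(e,x)=x$, which is identity \eqref{eq:axiom_id}.

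The only care needed is to use the group facts $e^{-1}=e$ and $e\circ e=e$ correctly. No other step presents an obstacle.
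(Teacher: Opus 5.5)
Your final combination is correct and is the paper's proof specialized to $g=e$. The paper writes $\phi(e,x)=\phi(g^{-1}\circ g,x)=\phi(g^{-1},\phi(g,x))=x$ for arbitrary $g$, and your $\phi(e,x)=\phi(e,\phi(e,x))=x$ is the same chain at $g=e$; the surjectivity/idempotence framing and the abandoned circular chain in Step 3 add nothing and should be cut.
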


\begin{proof}
  Fix any $g\in G$ and any $x\in\mathcal{X}$. Since $e=g^{-1}\circ g$,
  \[
    \phi(e,x)
    \;=\;\phi(g^{-1}\circ g,x)
    \;\stackrel{\eqref{eq:axiom_comp}}{=}\;\phi(g^{-1},\phi(g,x))
    \;\stackrel{\eqref{eq:axiom_inv}}{=}\;x,
  \]
  which is the identity axiom.
\end{proof}

\begin{lemma}[Invertibility from compatibility and identity]
  \label{lem:inv_from_comp_id}
  If $\phi$ satisfies compatibility \eqref{eq:axiom_comp} and identity \eqref{eq:axiom_id}, then $\phi$ satisfies invertibility \eqref{eq:axiom_inv}.
\end{lemma}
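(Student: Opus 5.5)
The plan is to mirror the proof of Lemma~\ref{lem:id_from_comp_inv}, but to run the chain of equalities in the opposite direction. Fix arbitrary $g\in G$ and $x\in\mathcal{X}$. The idea is to collapse the two successive applications in $\phi(g^{-1},\phi(g,x))$ into a single application using compatibility \eqref{eq:axiom_comp} with $g_1=g$ and $g_2=g^{-1}$. This produces $\phi(g^{-1}\circ g,x)$. Next, the group axioms of $G$ give $g^{-1}\circ g=e$, so this equals $\phi(e,x)$. Finally, the identity axiom \eqref{eq:axiom_id} gives $\phi(e,x)=x$.

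Written as one chain, the argument is
\[
  \phi(g^{-1},\phi(g,x))
  \;\stackrel{\eqref{eq:axiom_comp}}{=}\;\phi(g^{-1}\circ g,x)
  \;=\;\phi(e,x)
  \;\stackrel{\eqref{eq:axiom_id}}{=}\;x .
\]
Since $g$ and $x$ were arbitrary, this is exactly invertibility \eqref{eq:axiom_inv}.

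There is no real obstacle here. The only point needing care is that compatibility is assumed for all pairs $g_1,g_2\in G$, so it applies in particular to the pair $(g,g^{-1})$. The same argument with the roles swapped also yields $\phi(g,\phi(g^{-1},x))=x$, so each $\phi(g,\cdot)$ is a bijection of $\mathcal{X}$. This two-sided statement is not required by \eqref{eq:axiom_inv}, but it can be noted as a remark.
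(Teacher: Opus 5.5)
Your proof is correct and is the same as the paper's: apply compatibility with $(g_1,g_2)=(g,g^{-1})$, use $g^{-1}\circ g=e$, then invoke the identity axiom. Your closing remark is also correct (swapping roles gives $\phi(g,\phi(g^{-1},x))=x$, so each $\phi(g,\cdot)$ is a bijection), though the lemma does not need it.
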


\begin{proof}
  For any $g\in G$ and $x\in\mathcal{X}$,
  \[
    \phi(g^{-1},\phi(g,x))
    \;\stackrel{\eqref{eq:axiom_comp}}{=}\;\phi(g^{-1}\circ g,x)
    \;=\;\phi(e,x)
    \;\stackrel{\eqref{eq:axiom_id}}{=}\;x,
  \]
  which is the invertibility axiom.
\end{proof}

\paragraph{Independence of compatibility.}
Identity and invertibility together do not imply compatibility.
To see this, take $G=SO(2)$ and $\mathcal{X}=\mathbb{R}$, and define $\phi(R_\theta,x)=x+\sin\theta$, where $R_\theta$ denotes rotation by angle $\theta$.
This map is well-defined on $SO(2)$ because $\sin\theta$ is $2\pi$-periodic, and it satisfies identity and invertibility for every $x\in\mathbb{R}$ and $\theta$:
\[
  \phi(R_0,x)=x,\qquad
  \phi(R_{-\theta},\phi(R_\theta,x))=x+\sin\theta-\sin\theta=x.
\]
However, compatibility fails, since
\[
  \phi(R_{\pi/2},\phi(R_{\pi/2},x))=x+2\neq x=\phi(R_\pi,x).
\]

\paragraph{Consequence for the exact constraints.}
Lemmas~\ref{lem:id_from_comp_inv} and~\ref{lem:inv_from_comp_id} establish that, under exact compatibility, identity and invertibility imply one another.
Consequently, either identity or invertibility is redundant in an exact characterization that includes the other two axioms, while compatibility cannot be inferred from identity and invertibility alone.
This logical dependence concerns the axiom constraints themselves and does not imply that their corresponding optimization penalties are interchangeable.

%% file: appendix/env_description.tex
\FloatBarrier
\section{Simulation Setup and Evaluation Protocol}
\label{appendix:env_description}

We first specify the simulation setup, observation and action coordinates, and per-task reward functions. We then describe offline data collection in Section~\ref{appendix:behavior_policy}, followed by the evaluation metrics and group sampling in Section~\ref{sec:eval}.

\subsection{Robosuite Setup}
\label{appendix:robosuite_setup}

All four tabletop tasks share the same Robosuite hardware and perception configuration, chosen to mirror our physical platform so that the same observation and action interfaces can be reused in the sim-to-real study of Appendix~\ref{appendix:real_env_description}.
The simulated arm is a $7$-DoF Kinova GEN3 manipulator equipped with a Robotiq 2F85 parallel-jaw gripper, matching the real robot used for zero-shot transfer.
Rather than adopting one of Robosuite's built-in camera presets, we calibrate the simulator camera against the real workspace: we first mount an Intel RealSense D435 in front of the table and run a standard eye-to-hand calibration to recover its intrinsic matrix and its extrinsic pose with respect to the robot base, and we then transcribe both matrices into the simulator so that the virtual camera reproduces the same viewpoint, field of view, and projection geometry as the physical sensor.
This calibration-driven setup yields RGB observations whose camera geometry is aligned between simulation and the real robot, eliminating viewpoint-induced sim-to-real gap before any policy transfer.

\subsection{Per-Task Observation and Action Coordinates}
\label{appendix:obs_action_coords}

The observation space $\mathcal{O}$ is the same $3$-channel RGB image from a fixed camera (Figure~\ref{fig:env_figure_view}) for all four tasks; the action space $\mathcal{A}$ consists of task-dependent, normalized end-effector commands. Throughout the experiments and action visualizations, $(\Delta x,\Delta y,\Delta z)$, $\Delta\psi$, and $\Delta w$ denote dimensionless, normalized commands for Cartesian translation, yaw, and gripper opening, respectively.
For Drawer Opening and Drawer Closing, actions are $4$-dimensional, $(\Delta x,\Delta y,\Delta z,\Delta\psi)$.
For Cube Dropping, actions are also $4$-dimensional, $(\Delta x,\Delta y,\Delta z,\Delta w)$.
For Hammer Lifting, actions are $5$-dimensional, $(\Delta x,\Delta y,\Delta z,\Delta\psi,\Delta w)$.
Each action component uses zero-centered scaling, $a_i=\delta_i/s_i$, where $\delta_i$ is the physical command increment and $s_i>0$ is its per-coordinate scale. The normalized commands lie in $[-1,1]$, so $\mathcal A\subseteq[-1,1]^{d_a}$ with the task-specific dimension $d_a$ given above. The corresponding per-task vector-valued reward functions used by the discriminator are defined in Section~\ref{appendix:reward_functions} below.

\subsection{Environment Reward Functions}
\label{appendix:reward_functions}

The discriminator receives a vector-valued reward whose ordered components are listed in Table~\ref{tab:reward_composition}. Cube Dropping uses six components; the other tasks use four. We define the shared components once below and then specify the task-specific terms.

\paragraph{Shared reaching and action terms.}
Let $\mathbf{p}^{\mathrm{grp}}_t\in\mathbb R^3$ be the gripper position and $\mathbf{p}^{\mathrm{target}}_t\in\mathbb R^3$ the position of the task's reaching target (Table~\ref{tab:reward_composition}). All four tasks use
\begin{equation}
  \begin{aligned}
    d^{\mathrm{reach}}_t & := \left\lVert\mathbf{p}^{\mathrm{grp}}_t-\mathbf{p}^{\mathrm{target}}_t\right\rVert_2, \\
    r^{\mathrm{reach}}_t & := 1-\tanh\!\left(10\,d^{\mathrm{reach}}_t\right),
    \qquad r^{\mathrm{action}}_t := -\left\lVert\mathbf a_t\right\rVert_2.
  \end{aligned}
\end{equation}

\paragraph{Drawer progress and alignment.}
For both drawer tasks, let $p_t\in[0,1]$ denote the drawer-out fraction and $\epsilon_t^{\mathrm{yaw}}$ the yaw difference, in radians, between the gripper and the handle target. The task-specific terms are
\begin{equation}
  r^{\mathrm{open}}_t := \tanh(p_t),\qquad
  r^{\mathrm{close}}_t := 1-\tanh(p_t),\qquad
  r^{\mathrm{yaw}}_t := \cos^2(\epsilon_t^{\mathrm{yaw}}).
\end{equation}

\paragraph{Grasping and lifting.}
For Cube Dropping and Hammer Lifting, let $g_t\in\{0,1\}$ indicate whether the respective object is grasped and $\Delta h_t$ denote its height above the table surface. Both tasks use
\begin{equation}
  r^{\mathrm{grasp}}_t := g_t,\qquad
  r^{\mathrm{lift}}_t := \tanh\!\left(10\,\Delta h_t\right).
\end{equation}

\paragraph{Cube alignment and success.}
Let $\mathbf p^{\mathrm{cube}}_t,\mathbf p^{\mathrm{pot}}_t\in\mathbb R^3$ be the cube and pot positions. Alignment above the pot is measured in the horizontal plane:
\begin{equation}
  d^{\mathrm{align}}_t := \left\lVert\mathbf p^{\mathrm{cube}}_{t,1:2}-\mathbf p^{\mathrm{pot}}_{t,1:2}\right\rVert_2,
  \qquad r^{\mathrm{align}}_t := 1-\tanh\!\left(10\,d^{\mathrm{align}}_t\right).
\end{equation}
The success reward $r^{\mathrm{success}}_t=1$ when the cube is in the pot and the gripper has released it.

\begin{table}[htbp]
  \centering
  \caption{Per-task reward composition. Components are listed in their order in the reward vector; the reaching target specifies $\mathbf p^{\mathrm{target}}_t$.}
  \label{tab:reward_composition}
  \small
  \setlength{\tabcolsep}{4pt}
  \renewcommand{\arraystretch}{1.15}
  \begin{tabular}{l|ll}
    \toprule
    Task           & Reaching target & Reward vector $\mathbf r_t^{\top}$                                                                                                  \\
    \midrule
    Drawer Opening & Drawer handle   & $(r^{\mathrm{reach}}_t,r^{\mathrm{open}}_t,r^{\mathrm{yaw}}_t,r^{\mathrm{action}}_t)$                                               \\
    Drawer Closing & Drawer handle   & $(r^{\mathrm{reach}}_t,r^{\mathrm{close}}_t,r^{\mathrm{yaw}}_t,r^{\mathrm{action}}_t)$                                              \\
    Cube Dropping  & Cube            & $(r^{\mathrm{reach}}_t,r^{\mathrm{grasp}}_t,r^{\mathrm{lift}}_t,r^{\mathrm{align}}_t,r^{\mathrm{success}}_t,r^{\mathrm{action}}_t)$ \\
    Hammer Lifting & Hammer          & $(r^{\mathrm{reach}}_t,r^{\mathrm{grasp}}_t,r^{\mathrm{lift}}_t,r^{\mathrm{action}}_t)$                                             \\
    \bottomrule
  \end{tabular}
\end{table}

\subsection{Offline Dataset and Behavior Policy}
\label{appendix:behavior_policy}

For each task, we collect $100{,}000$ one-step transition tuples $(o,a,o',r)$ for training and $10{,}000$ for evaluation, as described in Section~\ref{sec:exp_setup}. Each tuple contains the current and next RGB observations, the normalized end-effector command, and the task-specific reward vector defined above.

The behavior policy used to collect the offline interaction data for all four tasks is a simple finite-state-machine controller that drives the robot end-effector through a sequence of task-specific waypoints using proportional pose control. The waypoint sequence for each task is listed below.

\paragraph{Drawer Opening.}
\begin{enumerate}
  \item a pre-grasp pose about $5$\,cm above the drawer handle;
  \item a grasp pose about $2$\,cm below the handle to establish contact;
  \item a pulling pose about $10$\,cm along the drawer-opening direction.
\end{enumerate}

\paragraph{Drawer Closing.}
\begin{enumerate}
  \item an approach pose about $10$\,cm in front of the handle and $2$\,cm above it;
  \item a contact pose at the same front offset but $2$\,cm below the handle height;
  \item a pushing pose about $10$\,cm past the handle in the closing direction.
\end{enumerate}

\paragraph{Cube Dropping.}
\begin{enumerate}
  \item an approach pose about $20$\,cm above the cube with the gripper open;
  \item a descend pose about $2$\,cm below the cube reference position for grasping;
  \item a lift pose about $20$\,cm above the cube with the gripper closed;
  \item a transfer pose about $20$\,cm above the pot;
  \item a drop pose at the same hover position above the pot, where the gripper opens to release the cube.
\end{enumerate}

\paragraph{Hammer Lifting.}
\begin{enumerate}
  \item an approach pose about $20$\,cm above the hammer with the gripper open;
  \item a descend pose about $2$\,cm below the hammer reference position to align for grasping;
  \item a grasp pose at the same position with the gripper closed;
  \item a lift pose about $20$\,cm upward.
\end{enumerate}

\input{appendix/evaluation_metrics}

%% file: appendix/evaluation_metrics.tex
\subsection{Evaluation Metrics and Protocol}
\label{sec:eval}

We evaluate the learned observation--action transformations $(\phi_o,\phi_a)$ along two criteria:
(1) \emph{observable symmetry}, i.e., whether transforming tuples preserves the offline transition-tuple distribution;
and (2) \emph{group-operation validity}, i.e., whether the learned maps satisfy basic group-operation axioms
(identity, compatibility, and invertibility).

\subsubsection{Observable Symmetry Evaluation}
We measure observable symmetry by the \textbf{Symmetry invariance error},
the Wasserstein-2 distance between the original dataset $\mathcal D$ and its
transformed version $\Phi(g,\mathcal D)$:
\begin{equation}
  \mathcal{E}_{\mathrm{sym}}(g) \;=\; W_2\!\left(\mathcal{D},\Phi(g,\mathcal{D})\right).
  \label{eq:wasserstein_main}
\end{equation}
A perfect symmetry yields $\mathcal{E}_{\mathrm{sym}}(g)=0$ (i.e., $p(\tau)=p(\Phi(g,\tau))$),
and smaller values mean $\Phi(g,\mathcal{D})$ remains closer in distribution to $\mathcal D$.

\paragraph{Ground cost in feature space.}
Direct optimal transport in raw pixel space is typically uninformative and expensive.
Instead, we embed observations using a fixed pretrained visual encoder $f_o:\mathcal O\to\mathbb R^{d_o}$ (e.g., ResNet-18 features) and compare tuples $\tau=(o,a,o',r)$ and $\tau'=(\bar o,\bar a,\bar o',\bar r)$ via the per-tuple feature distance
\begin{equation}
  d_f(\tau,\tau') \;:=\; \|f_o(o)-f_o(\bar o)\|_2^2 + \|a-\bar a\|_2^2 + \|f_o(o')-f_o(\bar o')\|_2^2,
  \label{eq:tuple_feature_dist}
\end{equation}
adding $\|r-\bar r\|_2^2$ as the Wasserstein ground cost (since arbitrary tuples may genuinely differ in reward) and using $d_f$ directly for the validity errors below (since $\Phi$ preserves $r$ by construction).

\subsubsection{Group-Operation Validity Evaluation}
Beyond distribution matching, $(\phi_o,\phi_a)$ should also satisfy the three group-operation axioms when applied to tuples: identity should leave a tuple unchanged, applying two transformations in sequence should equal applying their composition, and a transformation followed by its inverse should return to the original. For fixed group elements $g,g_1,g_2\in G$, we define three feature-space errors
\begin{equation}
  \begin{aligned}
    \mathcal{E}_{\mathrm{id}}            & = \mathbb{E}_{\tau\sim\mathcal{D}}\big[\, d_f(\Phi(e,\tau),\,\tau)\, \big],                                \\
    \mathcal{E}_{\mathrm{comp}}(g_1,g_2) & = \mathbb{E}_{\tau\sim\mathcal{D}}\big[\, d_f(\Phi(g_2,\Phi(g_1,\tau)),\,\Phi(g_2\circ g_1,\tau))\, \big], \\
    \mathcal{E}_{\mathrm{inv}}(g)        & = \mathbb{E}_{\tau\sim\mathcal{D}}\big[\, d_f(\Phi(g^{-1},\Phi(g,\tau)),\,\tau)\, \big].
  \end{aligned}
  \label{eq:axiom_errors}
\end{equation}

\subsubsection{Group Sampling for Evaluation}
\label{appendix:eval_sampling}
At evaluation time, $\overline{\mathcal E}_{\mathrm{sym}}$ and $\overline{\mathcal E}_{\mathrm{inv}}$ denote averages of $\mathcal E_{\mathrm{sym}}(g)$ and $\mathcal E_{\mathrm{inv}}(g)$ over $256$ angles uniformly sampled from $SO(2)$; identity error $\mathcal E_{\mathrm{id}}$ is evaluated only at $g=e$. For $\overline{\mathcal E}_{\mathrm{comp}}$, we average $\mathcal E_{\mathrm{comp}}(g_1,g_2)$ over all $16\times16=256$ ordered pairs of equally spaced rotation angles spanning the full circle. The compatibility heatmaps in Appendix~\ref{appendix:comp_angle_all_envs} display per-pair errors on this same grid. Qualitative visualizations of the learned transformations and of latent-space distribution matching use the discrete subgroup $C_4=\{e,h,h^2,h^3\}\subset SO(2)$ with $\theta_h=90^\circ$ in the linear latent representations.

%% file: appendix/implementation_details.tex
\FloatBarrier
\section{Symmetry-Discovery Implementation}
\label{appendix:implementation}

This appendix specifies the network architectures (Section~\ref{appendix:arch}), fixed group representations (Section~\ref{appendix:representations}), training objectives and procedure (Section~\ref{appendix:discovery_procedure}), and hyperparameters (Section~\ref{appendix:training_discovery}) for symmetry discovery. Evaluation metrics and evaluation-time group sampling are described in Appendix~\ref{sec:eval}, and the frame-averaged BC policy is described in Appendix~\ref{appendix:equivariant_bc}.

\subsection{Network Architectures}
\label{appendix:arch}

\paragraph{Perceptual autoencoder $(E_o, D_o)$ (Stage~1).}
We instantiate the perceptual autoencoder as a convolutional spherical autoencoder~\citep{zhao2019latent} that maps each workspace observation to a unit-norm perceptual latent and back, providing the smooth perceptual latent space on which Stage~2 symmetry discovery operates.
The network accepts a channel-first $C\times 128\times 128$ tensor with $C=3$ for the raw RGB images used in the simulation-only experiments and $C=2$ for the stacked heightmap and segmentation mask used in the sim-to-real pipeline.
The encoder is a five-stage convolutional tower with channel widths $4,8,16,32,64$, SiLU activations, and layer normalization, interleaved with four average-pooling downsamples that produce a $64\times 8\times 8$ bottleneck.
A linear projection followed by $\ell_2$ normalization then sends this bottleneck to a perceptual latent $z=E_o(o)\in\mathbb S^{15}$ with ambient dimension $d_z=16$.
The decoder mirrors this structure: a linear expansion lifts $z$ back to a $64\times 8\times 8$ feature map, bilinear-upsampling blocks with channel widths $128,64,32,16,8$ progressively restore spatial resolution, and a final $3\times 3$ convolution followed by a $\tanh$ activation and per-channel range de-normalization recovers the input tensor.

\paragraph{Symmetry modules $(E_z, D_z)$ and $(E_a, D_a)$ (Stage~2).}
The Stage~2 symmetry modules are two residual MLP autoencoders that wrap the Stage~1 perceptual latent and the environment action in their own learnable spherical symmetry latent spaces.
The observation symmetry encoder $E_z$ maps the Stage~1 perceptual latent $z\in\mathbb S^{15}$ to an observation symmetry latent $u=E_z(z)$ with dimension $d_u=64$, and the observation symmetry decoder $D_z$ maps it back to $\mathbb S^{15}$.
The action symmetry encoder $E_a$ maps each normalized environment action $a\in\mathcal A\subseteq[-1,1]^{d_a}$ to an action symmetry latent $v=E_a(a)$ with dimension $d_v=16$, and the action symmetry decoder $D_a$ maps it back to normalized action coordinates.

All four modules share a common MLP template.
Each encoder consists of a linear projection $\mathrm{Linear}(d_{\mathrm{in}},128)$, a single width-$128$ residual block, and a final $\mathrm{GELU}$--$\mathrm{Linear}(128,d_{\mathrm{lat}})$ projection.
Each decoder is wider: a linear expansion $\mathrm{Linear}(d_{\mathrm{lat}},256)$, a single width-$256$ residual block, and a final $\mathrm{GELU}$--$\mathrm{Linear}(256,d_{\mathrm{out}})$ projection.
The observation branch uses $d_{\mathrm{in}}=d_{\mathrm{out}}=d_z$ and $d_{\mathrm{lat}}=d_u$; the action branch uses $d_{\mathrm{in}}=d_{\mathrm{out}}=d_a$ and $d_{\mathrm{lat}}=d_v$.
Every residual block applies $\mathrm{LayerNorm}$--$\mathrm{GELU}$--$\mathrm{Linear}$--$\mathrm{LayerNorm}$--$\mathrm{GELU}$--$\mathrm{Linear}$ with an additive skip connection, and we use neither spectral normalization nor orthogonal parametrization.

\paragraph{Normalization and initialization.}
All Stage~2 symmetry latents are projected onto the unit hypersphere, and the output of $D_z$ is additionally projected back onto the Stage~1 perceptual latent space $\mathbb S^{15}$. Keeping transformed perceptual latents on the same sphere as the original latents facilitates adversarial tuple matching.
Linear layers use Kaiming initialization tuned for GELU activations, except for the final encoder projection, which uses Xavier-normal initialization with gain $0.01$ and zero bias.
The decomposition of each symmetry latent into rotating and invariant coordinates is specified in Section~\ref{appendix:representations}.

\paragraph{Discriminator $f_\psi$.}
The tuple discriminator $f_\psi$ is a spectral-normalized MLP with two scoring heads: a transition head $f_\psi^{\mathrm{mdp}}$ that scores latent transition tuples jointly and an auxiliary marginal head $f_\psi^{\mathrm{obs}}$ that scores perceptual latents on their own.
The transition head ingests a latent transition tuple $(z,a,z',r)$, where $z,z'\in\mathbb S^{15}$ are Stage~1 perceptual latents, $a\in\mathbb R^{d_a}$ is the environment action, and $r$ is the per-task vector-valued reward defined in Appendix~\ref{appendix:reward_functions}.
Each of the four components is first embedded into $\mathbb R^{32}$ by a dedicated spectral-normalized linear subnet, and the four embeddings are concatenated to form a $128$-dimensional tuple feature.
This feature is then mapped to a scalar logit via three spectral-normalized linear layers, interleaved with leaky ReLU activations.
The marginal head reuses the same backbone but starts from a single $\mathbb R^{128}$ embedding of a perceptual latent; we apply it independently to $z$ and $z'$ and concatenate the two scalar scores to form its output.

\subsection{Fixed Group Representations}
\label{appendix:representations}

We instantiate $\rho_z$ and $\rho_a$ as fixed block-diagonal representations of $g\in SO(2)$ in the learned observation and action symmetry latents.
The angle $\theta_g\in[0,2\pi)$ parameterizes $g$, and $R(\theta_g)\in\mathbb R^{2\times 2}$ is the standard $2$D rotation matrix.
For either input branch, indexed by $\bullet\in\{z,a\}$, the representation rotates $K_\bullet$ two-dimensional blocks and preserves $M_\bullet$ invariant coordinates:
\begin{equation*}
  \rho_\bullet(g)=\operatorname{diag}\!\big(\underbrace{R(\theta_g),\ldots,R(\theta_g)}_{K_\bullet\text{ copies}},\,I_{M_\bullet}\big).
\end{equation*}
Thus, $d_u=2K_z+M_z$ and $d_v=2K_a+M_a$, with the dimensions listed in Table~\ref{tab:discovery_latent_dimensions}.

\begin{table}[!htbp]
  \centering
  \caption{Decomposition of the Stage~2 symmetry latents. Each rotating block has two coordinates.}
  \label{tab:discovery_latent_dimensions}
  \small
  \begin{tabular}{l|ccc}
    \toprule
    Symmetry latent & \shortstack{Rotating blocks                   \\$K_\bullet$} & \shortstack{Invariant coordinates\\$M_\bullet$} & Total dimension \\
    \midrule
    Observation $u$ & $16$                        & $32$ & $d_u=64$ \\
    Action $v$      & $4$                         & $8$  & $d_v=16$ \\
    \bottomrule
  \end{tabular}
\end{table}

Decoding the transformed symmetry latents gives the operations
\begin{equation*}
  \phi_z(g,z)=D_z\!\big(\rho_z(g)E_z(z)\big),
  \qquad
  \phi_a(g,a)=D_a\!\big(\rho_a(g)E_a(a)\big).
\end{equation*}
Here, $\phi_z$ acts on the Stage~1 perceptual latent, while $\phi_a$ returns actions in the normalized coordinates defined in Appendix~\ref{appendix:obs_action_coords}.
The corresponding observation-space operation is $\phi_o(g,o)=D_o(\phi_z(g,E_o(o)))$.
As in Section~\ref{sec:param}, the angle specifies a rotation in the learned symmetry latents; the induced nonlinear maps $\phi_o$ and $\phi_a$ need not be geometric rotations by the same angle.

\FloatBarrier
\subsection{Training Objectives and Procedure}
\label{appendix:discovery_procedure}

\paragraph{Stage~1: Perceptual pretraining.}
We train $(E_o,D_o)$ on observations from $\mathcal D$ with a normalized $L_1$ reconstruction loss after an initial $L_2$ warm-up.
A latent cycle-consistency term encourages $E_o(D_o(z))\approx z$.
When enabled, a perceptual discriminator supplies adversarial and feature-matching losses to sharpen reconstruction texture; their weights are introduced with a linear warm-up.
The warm-up durations and optimization settings are collected in Section~\ref{appendix:training_discovery}.

\paragraph{Transition to Stage~2.}
After pretraining, we freeze $(E_o,D_o)$ and encode each transition tuple $\tau=(o,a,o',r)$ from the offline dataset as a latent transition tuple $t=(z,a,z',r)$ with $z=E_o(o)$ and $z'=E_o(o')$.
Stage~2 trains the symmetry modules $(E_z,D_z,E_a,D_a)$ and the tuple discriminator $f_\psi$.
For a sampled $g$, the transformed tuple $\Phi_z(g,t)$ applies $\phi_z$ to both perceptual latents and $\phi_a$ to the action, leaving the reward unchanged.

\paragraph{Stage~2: Adversarial tuple matching and group-consistency regularization.}
We implement the adversarial tuple-matching objective of Eq.~\eqref{eq:adv_critic} as a multi-head hinge-GAN on raw logits, applied to each scoring head as an independent discriminator and summed in a multi-discriminator fashion.
Concretely, the transition head contributes the hinge discriminator loss
\begin{equation*}
  \mathcal L_{\mathrm{disc}}^{\mathrm{mdp}}
  \;=\; \tfrac{1}{2}\,\mathbb E_{t}\!\left[\max(0,\,1-f_\psi^{\mathrm{mdp}}(t))\right]
  \;+\; \tfrac{1}{2}\,\mathbb E_{t,g}\!\left[\max(0,\,1+f_\psi^{\mathrm{mdp}}(\Phi_z(g,t)))\right],
\end{equation*}
and the marginal head contributes the same hinge form applied independently to each perceptual latent in the tuple, treating $z$ and $\phi_z(g,z)$ as a real/fake pair and likewise for $z'$ and $\phi_z(g,z')$.
The discriminator minimizes the total discriminator loss $\mathcal L_{\mathrm{disc}} = \mathcal L_{\mathrm{disc}}^{\mathrm{mdp}} + \mathcal L_{\mathrm{disc}}^{\mathrm{obs}}$. The symmetry modules act as the generator and minimize $\mathcal L_{\mathrm{gen}} = -\,\mathbb E_{t,g}[f_\psi(\Phi_z(g,t))]$, summed across heads in the same multi-head fashion. This loss is the hinge realization of the $\mathcal L_{\mathrm{adv}}$ term that enters the symmetry modules' total loss in Eq.~\eqref{eq:full_obj}.
The symmetry modules additionally minimize the identity, compatibility, and invertibility regularizers of Eq.~\eqref{eq:axiom_losses}, and the nontriviality penalty $\mathcal L_{\mathrm{nt}}$ between $z$ and $\phi_z(g,z)$ and between $z'$ and $\phi_z(g,z')$. Their weighted sum with the adversarial tuple-matching loss is specified in Eq.~\eqref{eq:full_obj}.

\paragraph{Input normalization and alternating updates.}
Perceptual latents and actions are passed to $f_\psi$ without further normalization, since the latents already lie on $\mathbb S^{15}$ by construction and the environment actions are already normalized to $[-1,1]^{d_a}$; reward components are standardized to zero mean and unit variance across $\mathcal D$.
Stabilization relies on the discriminator's spectral normalization together with gradient clipping; we use neither a gradient penalty nor an $R_1$ regularizer.
The discriminator and the symmetry modules are updated one-to-one: one discriminator step follows every symmetry-module step.

\paragraph{Training-time group sampling.}
We draw rotation angles as $\theta_g\sim\mathcal N(0,\pi^2)$, interpreted modulo $2\pi$.
This sampler has support over the full circle and includes no reflections, consistent with $G=SO(2)$.
Evaluation-time group sampling is specified separately in Appendix~\ref{appendix:eval_sampling}.

\subsection{Training Hyperparameters}
\label{appendix:training_discovery}

Table~\ref{tab:discovery_optimization} collects the core optimization settings for both stages.
The Stage~1 autoencoder learning rate is held constant throughout training, and both the Stage~2 symmetry modules and tuple discriminator use Adam betas $(0.5,0.999)$.

\begin{table}[!htbp]
  \centering
  \caption{Core symmetry-discovery optimization settings. Stage~2 settings apply to both the symmetry modules and tuple discriminator, except that the listed batch size is for symmetry-module updates.}
  \label{tab:discovery_optimization}
  \small
  \begin{tabular}{l|cc}
    \toprule
    Setting                        & \shortstack{Stage~1                     \\Perceptual autoencoder} & \shortstack{Stage~2\\Symmetry discovery} \\
    \midrule
    Optimizer                      & Adam                & Adam              \\
    Learning rate                  & $2\times 10^{-4}$   & $1\times 10^{-4}$ \\
    Gradient updates               & $100{,}000$         & $100{,}000$       \\
    Batch size                     & $32$                & $4096$            \\
    Weight decay                   & $0$                 & $0$               \\
    Gradient clipping (max.\ norm) & $1.0$               & $1.0$             \\
    \bottomrule
  \end{tabular}
\end{table}

\paragraph{Stage~1 warm-up and auxiliary losses.}
The reconstruction loss uses $L_2$ for the first $2{,}000$ iterations before switching to normalized $L_1$.
When the perceptual discriminator and feature-matching loss are enabled, the discriminator uses Adam with betas $(0.5,0.999)$, batch size $32$, and gradient clipping at norm $1.0$.
Its learning rate is $2\times 10^{-4}$ for RGB inputs and $1\times 10^{-4}$ for heightmap and segmentation-mask inputs.
The adversarial and feature-matching loss weights increase linearly from zero to $0.1$ and $2.0$, respectively, over the first $2{,}000$ iterations.

\paragraph{Stage~2 loss weights.}
The Stage~2 objective $\mathcal L_{\mathrm{total}}$ (Eq.~\ref{eq:full_obj}) uses $\lambda_{\mathrm{adv}}=\lambda_{\mathrm{id}}=\lambda_{\mathrm{comp}}=\lambda_{\mathrm{inv}}=1.0$, applied without warm-up, and $\lambda_{\mathrm{nt}}=0.1$ for the nontriviality penalty.

\paragraph{Seeds.}
All reported \method{} results are averaged over three random seeds ($\mathrm{seed}\in\{1,2,3\}$).

%% file: appendix/evaluation_all_envs.tex
\FloatBarrier
\section{Additional Symmetry-Discovery Results}
\label{appendix:additional_env_results}

We first report per-task errors on the symmetry-quality metrics and ablation results, then examine their dependence on the sampled group elements. We conclude with visualizations of the coupled observation--action transformations and alignment of transition-tuple embeddings during training. All results follow the metric definitions and evaluation protocol in Appendix~\ref{sec:eval}.

\subsection{Quantitative Comparison across Tasks}
\label{appendix:quantitative_all_envs}

Tables~\ref{tab:asi_all_tasks}--\ref{tab:invertibility_error_all_tasks} compare all methods on the four Robosuite tasks. Symmetry invariance and invertibility errors are averaged over $256$ uniformly sampled angles in $SO(2)$; compatibility error is averaged over all $256$ ordered pairs of $16$ equally spaced angles; identity error is evaluated at $g=e$. Lower is better for all four metrics.

\subsubsection{Symmetry invariance error}
\label{appendix:sym_all_envs}

\method{} attains the lowest aggregate symmetry invariance error and the best per-task means on Cube Dropping and Hammer Lifting (Table~\ref{tab:asi_all_tasks}). SymGAN and w/o $\mathcal L_{\mathrm{inv}}$ obtain the lowest means on Drawer Opening and Drawer Closing, respectively. Removing $\mathcal L_{\mathrm{adv}}$ yields the largest error on every task.

\begin{table}[!htbp]
  \centering
  \caption{\textbf{Average symmetry invariance error} $\overline{\mathcal E}_{\mathrm{sym}}$. Each task entry reports mean $\pm$ standard deviation across three training seeds, after averaging over $256$ evaluation angles within each seed. The All Tasks row reports the mean and standard deviation across all twelve task--seed averages. Lower is better.}
  \label{tab:asi_all_tasks}
  \setlength{\tabcolsep}{2pt}
  \renewcommand{\arraystretch}{1.05}
  \resizebox{\columnwidth}{!}{%
    \begin{tabular}{l|ccccccc}
      \toprule
      Task           & \method{}                  & SymGAN                     & LaLiGAN           & w/o $\mathcal{L}_{\mathrm{adv}}$ & w/o $\mathcal{L}_{\mathrm{id}}$ & w/o $\mathcal{L}_{\mathrm{comp}}$ & w/o $\mathcal{L}_{\mathrm{inv}}$ \\
      \midrule
      Drawer Opening & $0.198 \pm 0.021$          & $\mathbf{0.180 \pm 0.035}$ & $0.276 \pm 0.060$ & $1.794 \pm 0.114$                & $0.208 \pm 0.021$               & $0.223 \pm 0.029$                 & $0.225 \pm 0.011$                \\
      Drawer Closing & $0.174 \pm 0.005$          & $0.233 \pm 0.042$          & $0.498 \pm 0.087$ & $1.942 \pm 0.035$                & $0.171 \pm 0.026$               & $0.179 \pm 0.003$                 & $\mathbf{0.145 \pm 0.015}$       \\
      Cube Dropping  & $\mathbf{0.124 \pm 0.023}$ & $0.213 \pm 0.012$          & $0.208 \pm 0.009$ & $1.522 \pm 0.115$                & $0.131 \pm 0.023$               & $0.136 \pm 0.015$                 & $0.129 \pm 0.013$                \\
      Hammer Lifting & $\mathbf{0.131 \pm 0.011}$ & $0.322 \pm 0.025$          & $0.336 \pm 0.051$ & $1.546 \pm 0.035$                & $0.137 \pm 0.005$               & $0.159 \pm 0.002$                 & $0.158 \pm 0.004$                \\
      \midrule
      All Tasks      & $\mathbf{0.157 \pm 0.035}$ & $0.237 \pm 0.061$          & $0.329 \pm 0.122$ & $1.701 \pm 0.195$                & $0.162 \pm 0.037$               & $0.174 \pm 0.036$                 & $0.164 \pm 0.038$                \\
      \bottomrule
    \end{tabular}%
  }
\end{table}

\FloatBarrier

\subsubsection{Identity error}
\label{appendix:id_all_envs}

Removing $\mathcal L_{\mathrm{id}}$ increases identity error on all four tasks (Table~\ref{tab:identity_error_all_tasks}). Among \method{}, SymGAN, and LaLiGAN, LaLiGAN has the lowest aggregate identity error. The w/o $\mathcal L_{\mathrm{adv}}$ ablation obtains the lowest values overall, despite its large errors on the other three metrics.

\begin{table}[!htbp]
  \centering
  \captionsetup{singlelinecheck=false}
  \caption{\textbf{Average identity error} $\mathcal E_{\mathrm{id}}$. Mean $\pm$ standard deviation across tasks. Lower is better.}
  \label{tab:identity_error_all_tasks}
  \setlength{\tabcolsep}{2pt}
  \renewcommand{\arraystretch}{1.05}
  \resizebox{\columnwidth}{!}{%
    \begin{tabular}{l|ccccccc}
      \toprule
      Task           & \method{}         & SymGAN            & LaLiGAN           & w/o $\mathcal{L}_{\mathrm{adv}}$ & w/o $\mathcal{L}_{\mathrm{id}}$ & w/o $\mathcal{L}_{\mathrm{comp}}$ & w/o $\mathcal{L}_{\mathrm{inv}}$ \\
      \midrule
      Drawer Opening & $0.295 \pm 0.010$ & $1.790 \pm 0.062$ & $0.317 \pm 0.003$ & $\mathbf{0.157 \pm 0.007}$       & $0.361 \pm 0.024$               & $0.308 \pm 0.003$                 & $0.322 \pm 0.012$                \\
      Drawer Closing & $0.324 \pm 0.022$ & $2.049 \pm 0.005$ & $0.351 \pm 0.021$ & $\mathbf{0.162 \pm 0.003}$       & $0.372 \pm 0.072$               & $0.321 \pm 0.011$                 & $0.321 \pm 0.043$                \\
      Cube Dropping  & $0.344 \pm 0.050$ & $1.769 \pm 0.018$ & $0.331 \pm 0.013$ & $\mathbf{0.195 \pm 0.015}$       & $0.440 \pm 0.020$               & $0.347 \pm 0.007$                 & $0.377 \pm 0.026$                \\
      Hammer Lifting & $0.303 \pm 0.013$ & $2.208 \pm 0.176$ & $0.185 \pm 0.008$ & $\mathbf{0.111 \pm 0.001}$       & $0.757 \pm 0.018$               & $0.322 \pm 0.004$                 & $0.326 \pm 0.013$                \\
      \midrule
      All Tasks      & $0.317 \pm 0.034$ & $1.954 \pm 0.206$ & $0.296 \pm 0.066$ & $\mathbf{0.156 \pm 0.031}$       & $0.483 \pm 0.166$               & $0.325 \pm 0.016$                 & $0.337 \pm 0.036$                \\
      \bottomrule
    \end{tabular}%
  }
\end{table}

\FloatBarrier

\subsubsection{Compatibility error}
\label{appendix:comp_all_envs}

\method{} achieves the lowest aggregate compatibility error and the best per-task means on Drawer Opening, Cube Dropping, and Hammer Lifting (Table~\ref{tab:compatibility_error_all_tasks}). The w/o $\mathcal L_{\mathrm{inv}}$ ablation obtains the lowest mean on Drawer Closing. Removing $\mathcal L_{\mathrm{comp}}$ increases compatibility error on every task.

\begin{table}[!htbp]
  \centering
  \captionsetup{singlelinecheck=false}
  \caption{\textbf{Average compatibility error} $\overline{\mathcal E}_{\mathrm{comp}}$. Mean $\pm$ standard deviation across tasks. Lower is better.}
  \label{tab:compatibility_error_all_tasks}
  \setlength{\tabcolsep}{2pt}
  \renewcommand{\arraystretch}{1.05}
  \resizebox{\columnwidth}{!}{%
    \begin{tabular}{l|ccccccc}
      \toprule
      Task           & \method{}                  & SymGAN            & LaLiGAN           & w/o $\mathcal{L}_{\mathrm{adv}}$ & w/o $\mathcal{L}_{\mathrm{id}}$ & w/o $\mathcal{L}_{\mathrm{comp}}$ & w/o $\mathcal{L}_{\mathrm{inv}}$ \\
      \midrule
      Drawer Opening & $\mathbf{0.856 \pm 0.283}$ & $1.824 \pm 0.136$ & $1.853 \pm 0.155$ & $3.531 \pm 0.192$                & $1.009 \pm 0.221$               & $1.308 \pm 0.276$                 & $1.090 \pm 0.094$                \\
      Drawer Closing & $0.668 \pm 0.059$          & $2.121 \pm 0.109$ & $1.959 \pm 0.149$ & $3.314 \pm 0.120$                & $0.651 \pm 0.292$               & $0.871 \pm 0.030$                 & $\mathbf{0.445 \pm 0.155}$       \\
      Cube Dropping  & $\mathbf{0.521 \pm 0.160}$ & $1.888 \pm 0.052$ & $1.679 \pm 0.093$ & $3.718 \pm 0.364$                & $0.662 \pm 0.074$               & $0.913 \pm 0.204$                 & $0.614 \pm 0.272$                \\
      Hammer Lifting & $\mathbf{0.821 \pm 0.123}$ & $2.144 \pm 0.016$ & $1.861 \pm 0.007$ & $3.012 \pm 0.118$                & $0.951 \pm 0.078$               & $1.443 \pm 0.025$                 & $1.101 \pm 0.038$                \\
      \midrule
      All Tasks      & $\mathbf{0.717 \pm 0.221}$ & $1.994 \pm 0.167$ & $1.838 \pm 0.155$ & $3.394 \pm 0.344$                & $0.818 \pm 0.251$               & $1.134 \pm 0.301$                 & $0.813 \pm 0.333$                \\
      \bottomrule
    \end{tabular}%
  }
\end{table}

\FloatBarrier

\subsubsection{Invertibility error}
\label{appendix:inv_all_envs}

\method{} attains the lowest aggregate invertibility error and the best mean on Cube Dropping (Table~\ref{tab:invertibility_error_all_tasks}). SymGAN, w/o $\mathcal L_{\mathrm{inv}}$, and w/o $\mathcal L_{\mathrm{comp}}$ obtain the lowest means on Drawer Opening, Drawer Closing, and Hammer Lifting, respectively. Removing $\mathcal L_{\mathrm{inv}}$ increases invertibility error on three tasks but reduces it on Drawer Closing.

\begin{table}[!htbp]
  \centering
  \captionsetup{singlelinecheck=false}
  \caption{\textbf{Average invertibility error} $\overline{\mathcal E}_{\mathrm{inv}}$. Mean $\pm$ standard deviation across tasks. Lower is better.}
  \label{tab:invertibility_error_all_tasks}
  \setlength{\tabcolsep}{2pt}
  \renewcommand{\arraystretch}{1.05}
  \resizebox{\columnwidth}{!}{%
    \begin{tabular}{l|ccccccc}
      \toprule
      Task           & \method{}                  & SymGAN                     & LaLiGAN           & w/o $\mathcal{L}_{\mathrm{adv}}$ & w/o $\mathcal{L}_{\mathrm{id}}$ & w/o $\mathcal{L}_{\mathrm{comp}}$ & w/o $\mathcal{L}_{\mathrm{inv}}$ \\
      \midrule
      Drawer Opening & $0.911 \pm 0.242$          & $\mathbf{0.649 \pm 0.095}$ & $1.138 \pm 0.063$ & $5.947 \pm 0.369$                & $1.051 \pm 0.200$               & $1.083 \pm 0.347$                 & $1.115 \pm 0.105$                \\
      Drawer Closing & $0.712 \pm 0.057$          & $1.132 \pm 0.281$          & $1.927 \pm 0.208$ & $6.142 \pm 0.211$                & $0.689 \pm 0.229$               & $0.652 \pm 0.006$                 & $\mathbf{0.537 \pm 0.154}$       \\
      Cube Dropping  & $\mathbf{0.612 \pm 0.132}$ & $1.498 \pm 0.152$          & $1.360 \pm 0.114$ & $4.594 \pm 0.324$                & $0.771 \pm 0.065$               & $0.735 \pm 0.099$                 & $0.747 \pm 0.205$                \\
      Hammer Lifting & $0.685 \pm 0.081$          & $1.514 \pm 0.004$          & $1.480 \pm 0.056$ & $3.842 \pm 0.185$                & $0.744 \pm 0.053$               & $\mathbf{0.683 \pm 0.016}$        & $0.903 \pm 0.018$                \\
      \midrule
      All Tasks      & $\mathbf{0.730 \pm 0.183}$ & $1.198 \pm 0.389$          & $1.476 \pm 0.314$ & $5.131 \pm 0.995$                & $0.814 \pm 0.211$               & $0.788 \pm 0.250$                 & $0.826 \pm 0.253$                \\
      \bottomrule
    \end{tabular}%
  }
\end{table}

\FloatBarrier

\begin{samepage}
  \paragraph{Joint interpretation.}
  \method{} has the lowest aggregate symmetry invariance, compatibility, and invertibility errors. Removing the adversarial tuple-matching term improves identity but substantially worsens the other three metrics. Removing compatibility increases aggregate compatibility error more than removing identity or invertibility, consistent with Appendix~\ref{appendix:axiom_redundancy}. The task-level exceptions show that the benefits of individual regularizers depend on the task.

\end{samepage}

\FloatBarrier
\subsection{Symmetry Quality across Group Elements}
\label{appendix:angle_analysis}

The curves and heatmaps use the same evaluation angles and angle pairs as Section~\ref{appendix:quantitative_all_envs}, revealing whether errors concentrate near particular transformations.

\subsubsection{Symmetry invariance across angles}
\label{appendix:sym_angle_all_envs}

Figure~\ref{fig:sym_curve_all_envs} shows low, nearly flat symmetry invariance error for \method{} across the sampled angles on all four tasks. Removing $\mathcal L_{\mathrm{adv}}$ produces broad plateaus or peaks near $180^\circ$, revealing distribution mismatch across a wide range of transformations.

\begin{figure}[!htbp]
  \centering
  \begin{minipage}{0.49\linewidth}
    \centering
    \includegraphics[width=\linewidth]{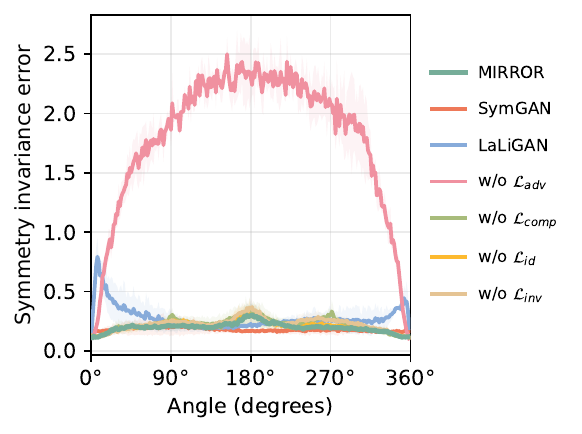}\\[2pt]
    \small (a) Drawer Opening
  \end{minipage}\hfill
  \begin{minipage}{0.49\linewidth}
    \centering
    \includegraphics[width=\linewidth]{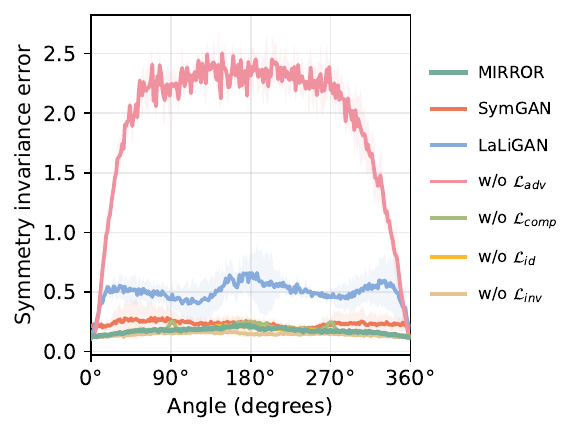}\\[2pt]
    \small (b) Drawer Closing
  \end{minipage}\\[6pt]
  \begin{minipage}{0.49\linewidth}
    \centering
    \includegraphics[width=\linewidth]{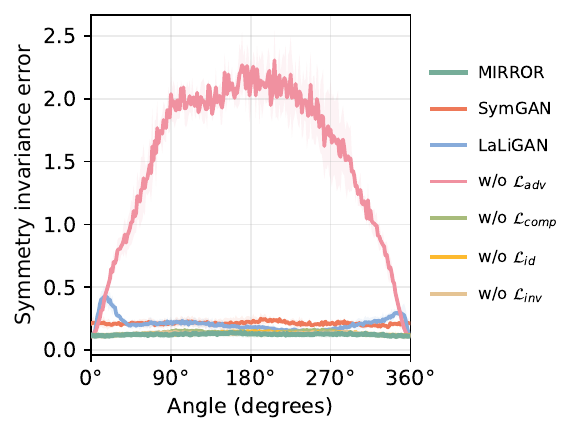}\\[2pt]
    \small (c) Cube Dropping
  \end{minipage}\hfill
  \begin{minipage}{0.49\linewidth}
    \centering
    \includegraphics[width=\linewidth]{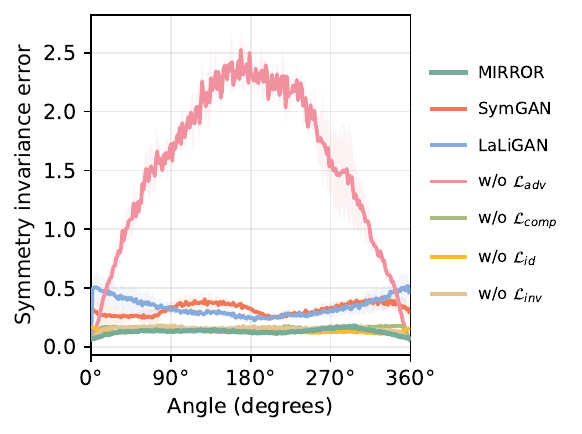}\\[2pt]
    \small (d) Hammer Lifting
  \end{minipage}
  \caption{
    \textbf{Symmetry invariance error across latent rotation angles.}
    Lines show the mean feature-space Wasserstein-2 distance between original and transformed transition-tuple distributions at $256$ uniformly sampled $SO(2)$ angles.
    Shaded bands indicate $\pm 1$ standard deviation across three seeds; lower is better.
    \method{} maintains low error across all four tasks.
  }
  \label{fig:sym_curve_all_envs}
\end{figure}

\clearpage
\subsubsection{Compatibility across Angle Pairs}
\label{appendix:comp_angle_all_envs}

Figure~\ref{fig:comp_heatmap_all_envs} shows low compatibility error for \method{} across the evaluated angle pairs. Removing $\mathcal L_{\mathrm{adv}}$ produces bands near inverse-angle pairs $\theta_{g_2}\approx-\theta_{g_1}\pmod{2\pi}$ or high-error regions near $\theta_{g_1}=180^\circ$. LaLiGAN shows narrower ridges or intersecting streaks, while SymGAN has more uniformly elevated error.

\begin{figure}[!htbp]
  \centering
  \includegraphics[width=\linewidth]{figures/fig_comp_error_drawer_open_v1.pdf}\\[2pt]
  \small (a) Drawer Opening\\[10pt]
  \includegraphics[width=\linewidth]{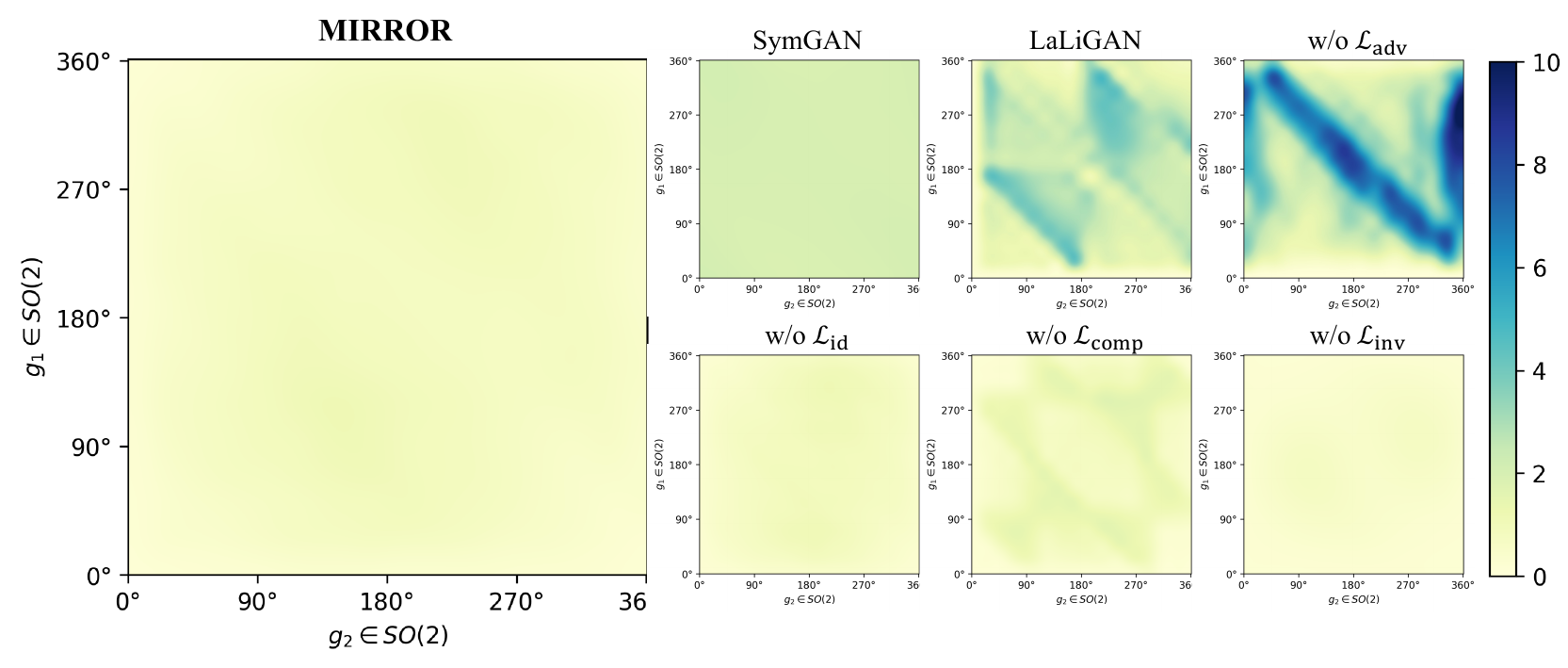}\\[2pt]
  \small (b) Drawer Closing
  \caption{
    \textbf{Compatibility error heatmaps across four tasks.}
    Compatibility error compares a transition tuple transformed successively by $g_1$ and $g_2$ with the same tuple transformed once by their composition $g_2\circ g_1$.
    Errors are evaluated at all $256$ ordered pairs of $16$ equally spaced rotation angles; the heatmaps are smoothed for visualization.
    Within each task panel, the large left subplot shows \method{} and the six smaller subplots show baselines and single-loss ablations on a shared color scale (lower is better).
    \method{} maintains low error across the evaluated angle pairs. Panels (c)--(d) continue on the next page.
  }
  \label{fig:comp_heatmap_all_envs}
\end{figure}

\clearpage
\begin{figure}[!htbp]\ContinuedFloat
  \centering
  \includegraphics[width=\linewidth]{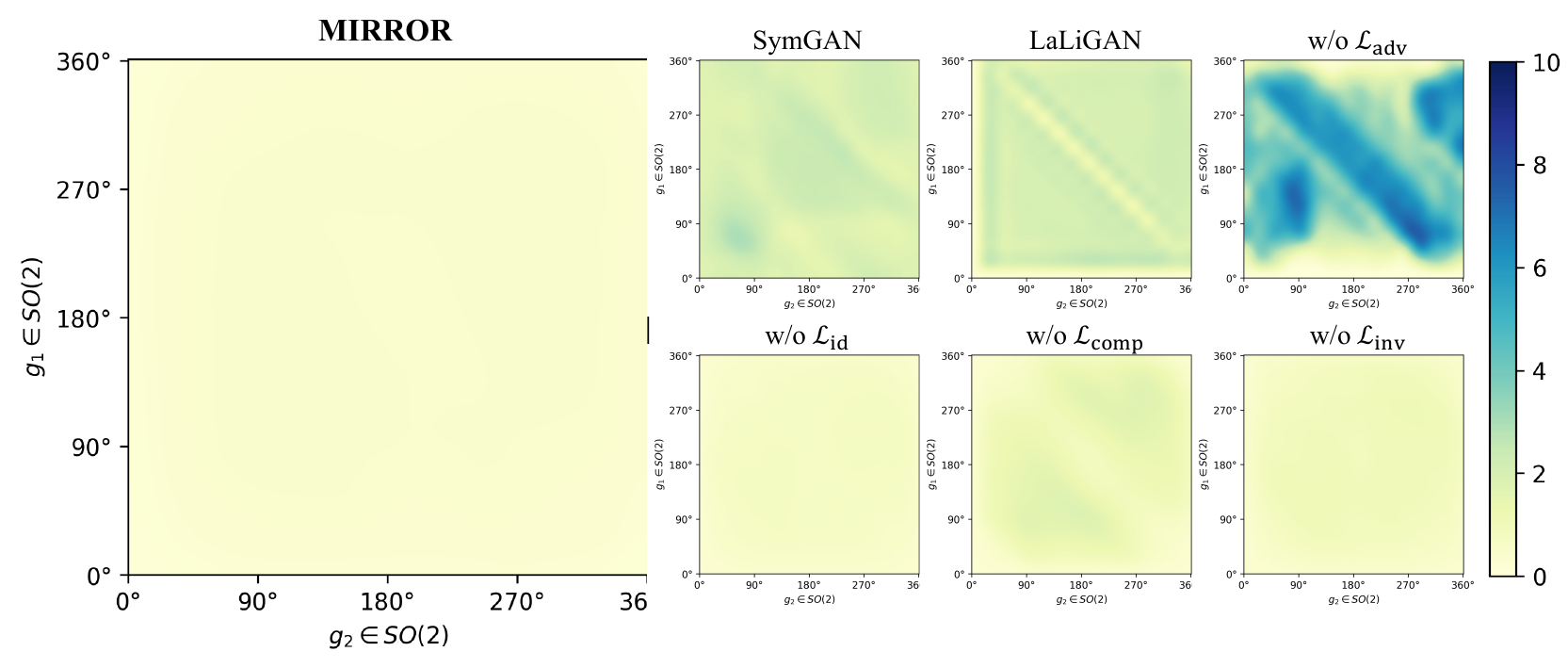}\\[2pt]
  \small (c) Cube Dropping\\[10pt]
  \includegraphics[width=\linewidth]{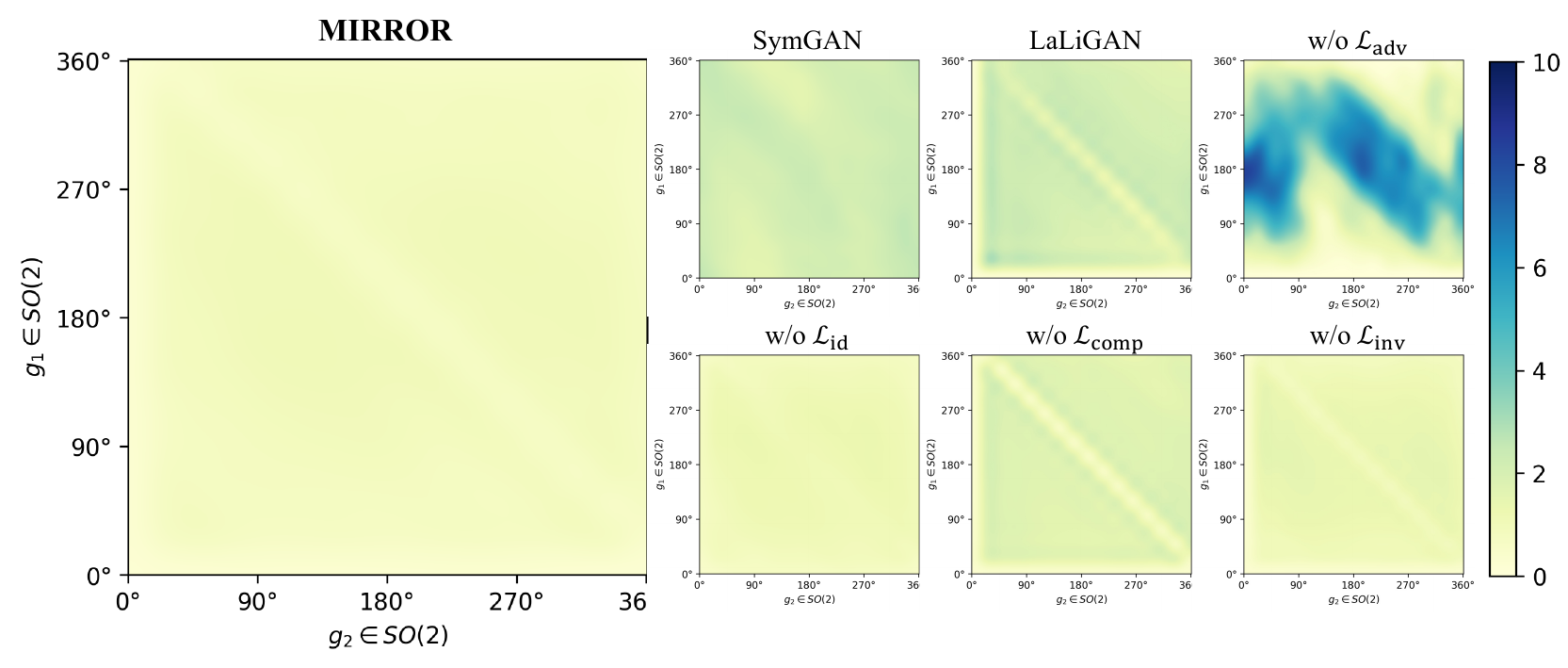}\\[2pt]
  \small (d) Hammer Lifting
  \caption{
    \textbf{Compatibility error heatmaps across four tasks (continued).}
    Panels (c)--(d) show Cube Dropping and Hammer Lifting.
    The smoothed heatmaps compare successive transformations by $g_1$ and $g_2$ with their composition $g_2\circ g_1$, evaluated at all $256$ ordered pairs of $16$ equally spaced rotation angles.
    Within each task panel, \method{}, baselines, and single-loss ablations share a color scale. Lower is better.
  }
\end{figure}

\clearpage

\subsubsection{Invertibility across Angles}
\label{appendix:inv_angle_all_envs}

Figure~\ref{fig:inv_curve_all_envs} shows that \method{} maintains relatively low invertibility error across the sampled angles on all four tasks. Removing $\mathcal L_{\mathrm{adv}}$ produces broad plateaus or peaks near $180^\circ$, indicating that invertibility also degrades across a wide range of transformations when adversarial tuple matching is removed. The remaining methods exhibit task-dependent differences, as summarized in Table~\ref{tab:invertibility_error_all_tasks}.

\begin{figure}[!htbp]
  \centering
  \begin{minipage}{0.49\linewidth}
    \centering
    \includegraphics[width=\linewidth]{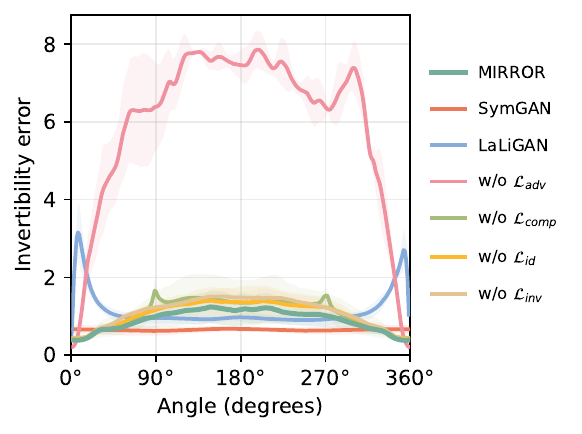}\\[2pt]
    \small (a) Drawer Opening
  \end{minipage}\hfill
  \begin{minipage}{0.49\linewidth}
    \centering
    \includegraphics[width=\linewidth]{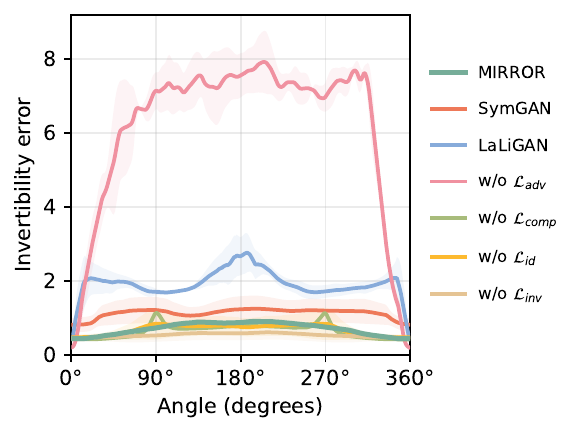}\\[2pt]
    \small (b) Drawer Closing
  \end{minipage}\\[6pt]
  \begin{minipage}{0.49\linewidth}
    \centering
    \includegraphics[width=\linewidth]{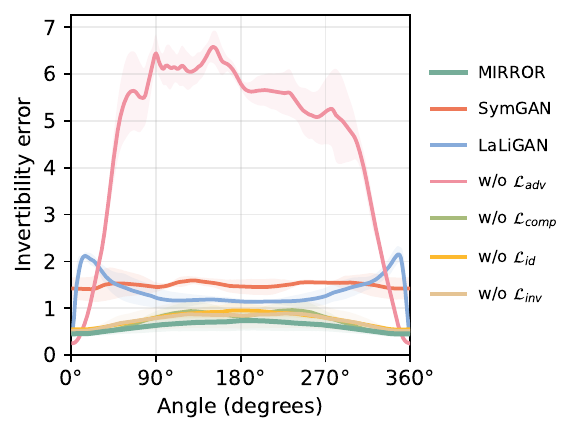}\\[2pt]
    \small (c) Cube Dropping
  \end{minipage}\hfill
  \begin{minipage}{0.49\linewidth}
    \centering
    \includegraphics[width=\linewidth]{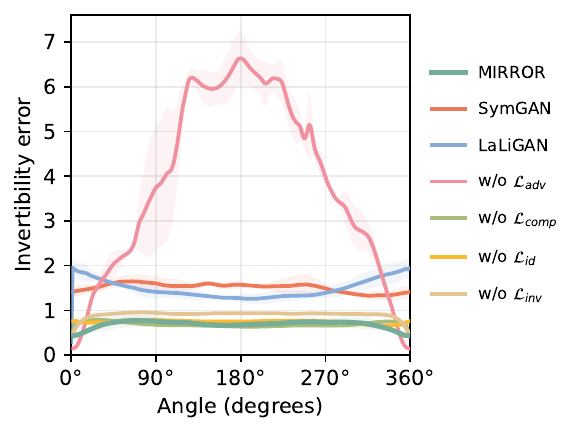}\\[2pt]
    \small (d) Hammer Lifting
  \end{minipage}
  \caption{
    \textbf{Invertibility error across latent rotation angles.}
    For each task, $\mathcal E_{\mathrm{inv}}(g)$ measures how closely applying $g$ followed by $g^{-1}$ recovers the original transition tuple.
    Lines show mean errors at $256$ angles uniformly sampled from $SO(2)$, and shaded bands indicate $\pm 1$ standard deviation across three training seeds. Lower is better.
    \method{} maintains relatively low recovery error across the sampled angles on all four tasks.
  }
  \label{fig:inv_curve_all_envs}
\end{figure}

\clearpage
\subsection{Qualitative Visualization of Discovered Symmetries}
\label{appendix:qualitative_all_envs}

\subsubsection{Learned Observation--Action Transformations}

Figure~\ref{fig:vis_symmetry_all_envs} applies each $g\in C_4$ to the same observation--action pair $(o,a)$ in each environment. Across all four tasks, $\phi_o(g,o)$ changes the scene configuration while $\phi_a(g,a)$ adjusts the normalized planar commands $(\Delta x,\Delta y)$ jointly with it. The remaining action components ($\Delta z$ and the task-specific $\Delta\psi$ and/or $\Delta w$) remain approximately unchanged. The angles specify rotations in the learned symmetry latents; their geometric effect after decoding is determined by the learned maps.

\begin{figure}[!htbp]
  \centering
  \includegraphics[width=\linewidth]{figures/fig_vis_symmetry_drawer_open_v1.pdf}\\[2pt]
  \small (a) Drawer Opening\\[6pt]
  \includegraphics[width=\linewidth]{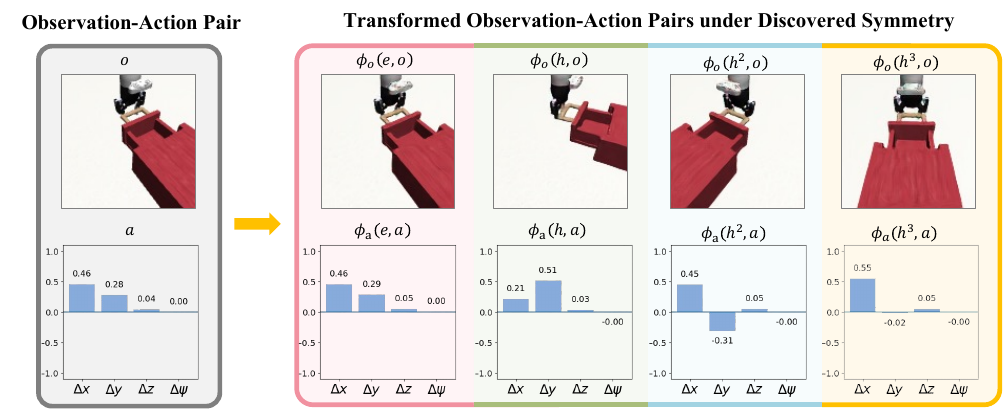}\\[2pt]
  \small (b) Drawer Closing
  \caption{
    \textbf{Learned RGB observation--action transformations.}
    Each task panel shows a fixed original pair $(o,a)$ on the left and four transformed pairs $(\phi_o(g,o),\phi_a(g,a))$ on the right, for $g\in C_4=\{e,h,h^2,h^3\}$ with latent angles $0^\circ,90^\circ,180^\circ,270^\circ$, respectively.
    Action bars show normalized command components on a $[-1,1]$ scale.
    The scene and planar commands $(\Delta x,\Delta y)$ change jointly; the decoded transformations need not be geometric rotations by the latent angles.
    Panels (c)--(d) continue on the next page.
  }
  \label{fig:vis_symmetry_all_envs}
\end{figure}

\begin{figure}[t!]\ContinuedFloat
  \centering
  \includegraphics[width=\linewidth]{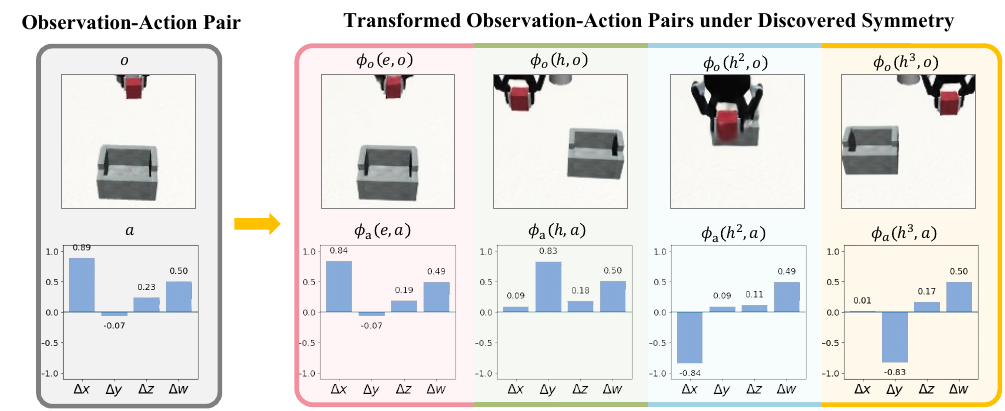}\\[2pt]
  \small (c) Cube Dropping\\[6pt]
  \includegraphics[width=\linewidth]{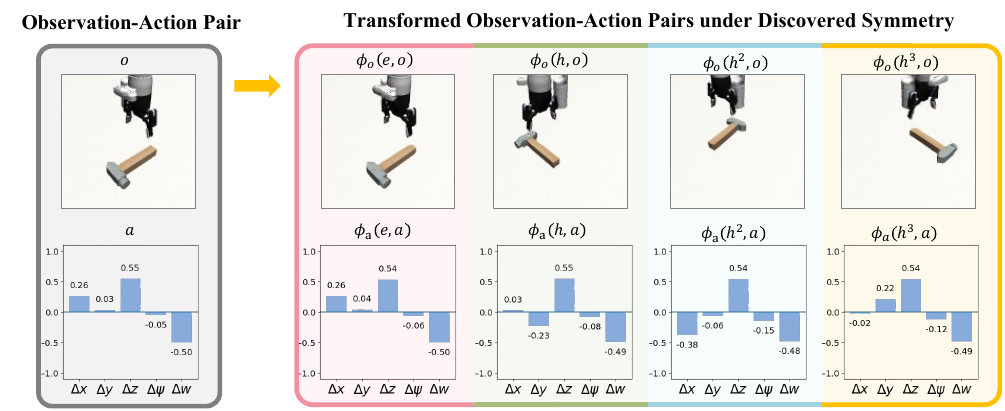}\\[2pt]
  \small (d) Hammer Lifting
  \caption{\textbf{Learned RGB observation--action transformations (continued).}
    Panels (c)--(d) show Cube Dropping and Hammer Lifting, with the original pair on the left and four transformed pairs on the right.
    Columns $e,h,h^2,h^3$ correspond to latent angles $0^\circ,90^\circ,180^\circ,270^\circ$, respectively; decoded transformations need not be geometric rotations by these angles.
    Action bars show normalized command components on a $[-1,1]$ scale, with $\Delta w$ denoting gripper opening.}
\end{figure}

\clearpage
\subsubsection{Alignment of Transition-Tuple Embeddings during Training}

Figure~\ref{fig:latent_alignment_all_envs} projects transition-tuple embeddings of $\mathcal D$ and $\Phi(g,\mathcal D)$ to two dimensions with PCA, separately for each $g\in C_4$ and at four training checkpoints. The projected clouds are initially separated for nonidentity elements and increasingly overlap during training. By $1{\times}10^5$ steps, they overlap substantially across the four tasks. This visualization supports the quantitative distribution-matching results in Table~\ref{tab:asi_all_tasks}; overlap in a two-dimensional projection alone does not establish equality of the full transition-tuple distributions.

\begin{figure}[!htbp]
  \centering
  \begin{minipage}{0.49\linewidth}
    \centering
    \includegraphics[width=\linewidth]{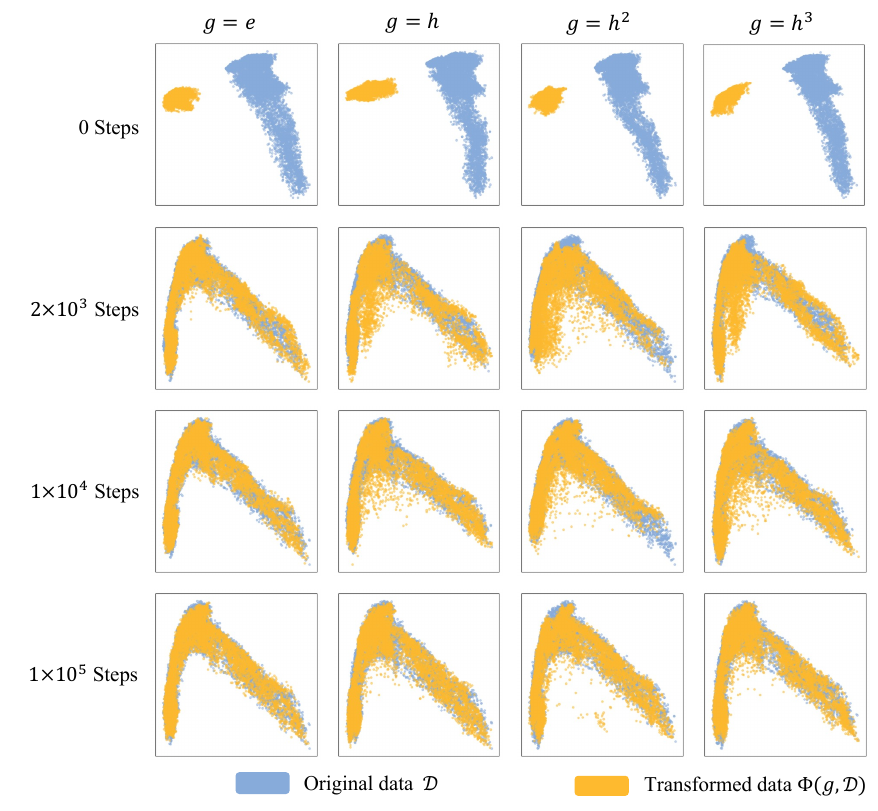}\\[2pt]
    \small (a) Drawer Opening
  \end{minipage}\hfill
  \begin{minipage}{0.49\linewidth}
    \centering
    \includegraphics[width=\linewidth]{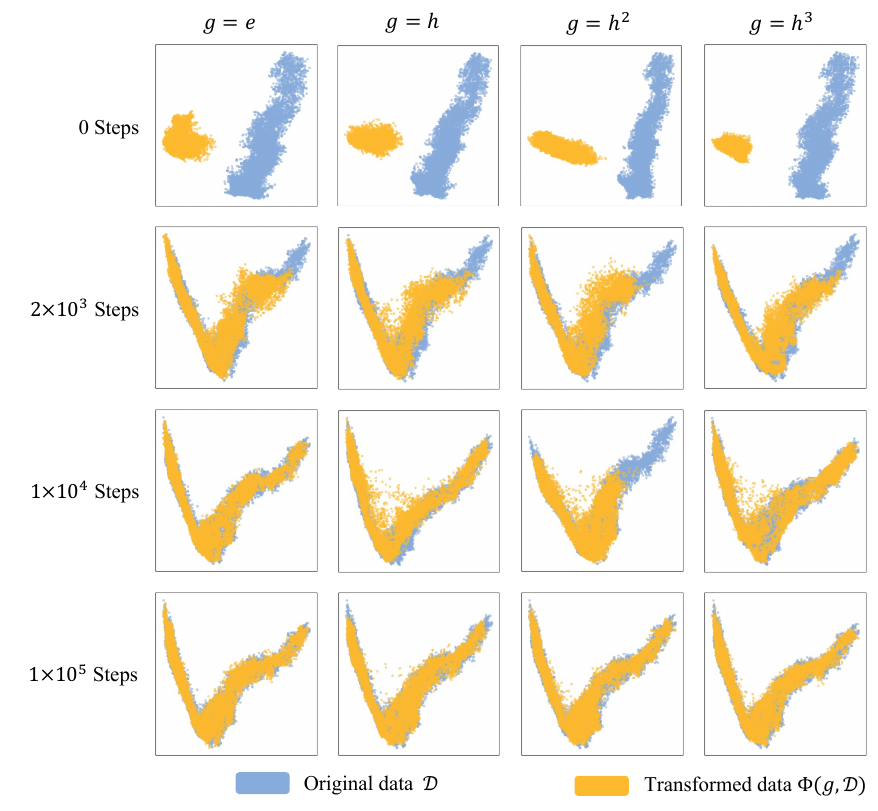}\\[2pt]
    \small (b) Drawer Closing
  \end{minipage}\\[6pt]
  \begin{minipage}{0.49\linewidth}
    \centering
    \includegraphics[width=\linewidth]{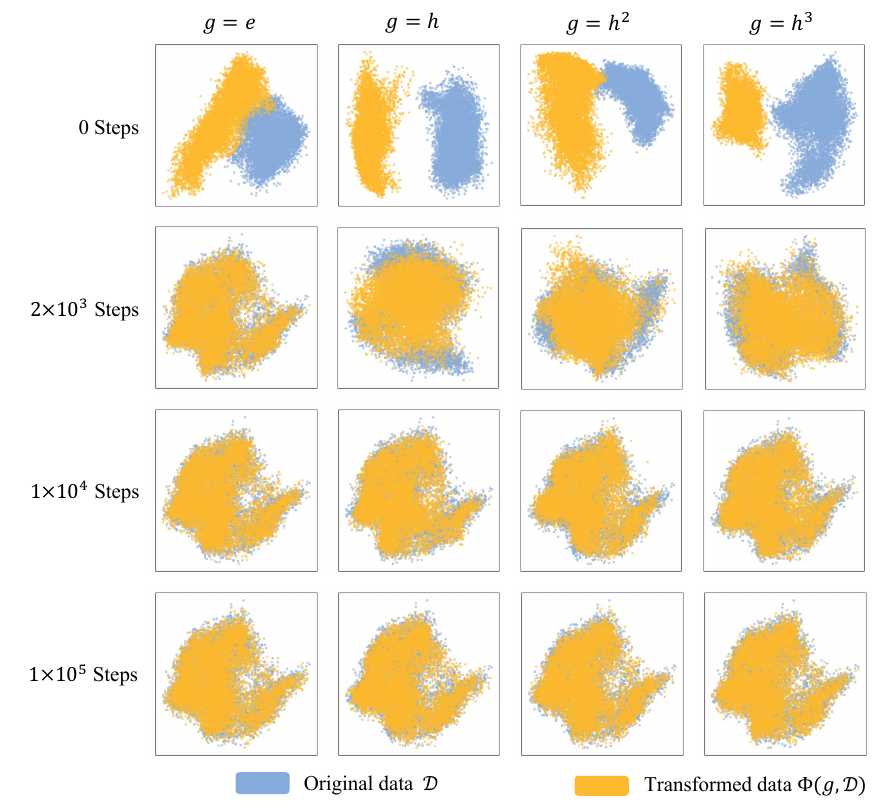}\\[2pt]
    \small (c) Cube Dropping
  \end{minipage}\hfill
  \begin{minipage}{0.49\linewidth}
    \centering
    \includegraphics[width=\linewidth]{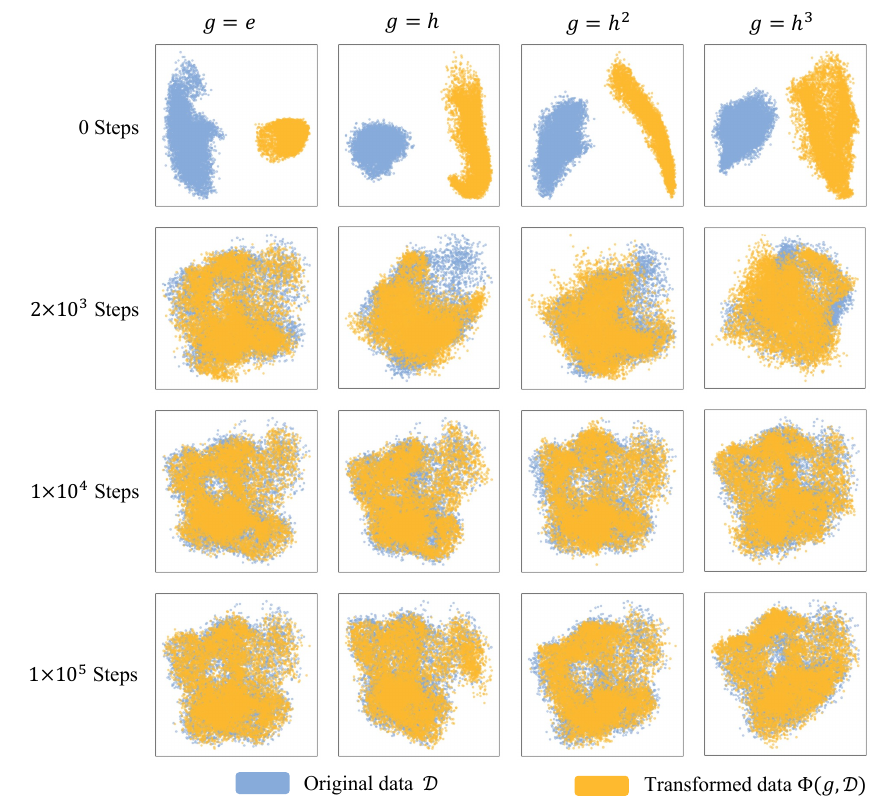}\\[2pt]
    \small (d) Hammer Lifting
  \end{minipage}
  \caption{
    \textbf{Alignment of transition-tuple embeddings during training.}
    Two-dimensional PCA projections show embeddings of the original dataset $\mathcal D$ (blue) and transformed dataset $\Phi(g,\mathcal D)$ (orange) across four tasks.
    Columns correspond to $g\in C_4=\{e,h,h^2,h^3\}$, where $h$ denotes a $90^\circ$ latent rotation; rows show $0$, $2{\times}10^3$, $1{\times}10^4$, and $1{\times}10^5$ training steps.
    Increasing overlap qualitatively supports distribution matching.
  }
  \label{fig:latent_alignment_all_envs}
\end{figure}

\FloatBarrier

%% file: appendix/equivariant_bc.tex
\FloatBarrier
\section{Frame-Averaged Behavior Cloning}
\label{appendix:equivariant_bc}

This appendix describes EBC-\method{}, the frame-averaged BC policy built on the discovered transformations $(\phi_o,\phi_a)$. We present the policy construction and its equivariance conditions (Section~\ref{appendix:frame_averaging}), architecture and training objective (Section~\ref{appendix:ebc_training}), baselines and evaluation protocol (Section~\ref{appendix:ebc_eval}), and subgroup-size results (Section~\ref{appendix:ebc_subgroup_results}). The RGB results support Section~\ref{sec:application}; the sim-to-real experiments are collected in Appendix~\ref{appendix:sim2real}.

\subsection{Frame-Averaged Policy and Equivariance Conditions}
\label{appendix:frame_averaging}

Let $H$ be a finite subgroup of $G$ and let $f_\theta$ be a base policy network that maps observations to actions. Following the frame-averaging construction~\citep{puny2021frame}, we use the discovered transformations $\phi_o$ and $\phi_a$ to define
\begin{equation}
  \pi_\theta(o)
  ~:=~
  \frac{1}{|H|}\sum_{g\in H}
  \phi_a\!\left(g^{-1},\, f_\theta(\phi_o(g,o))\right).
  \label{eq:fa_policy}
\end{equation}
Intuitively, we (i) transform the input observation by $g$, (ii) predict an action in that transformed frame via $f_\theta$, (iii) map the predicted action back with $\phi_a(g^{-1},\cdot)$, and finally (iv) average over $g\in H$.

Our experiments use $G=SO(2)$ and average over a finite cyclic subgroup $H=C_n\subset SO(2)$. The equivariance result below concerns this subgroup $H$.

\paragraph{Conditions for exact equivariance.}
Assume that $\phi_o$ and $\phi_a$ satisfy the group-operation axioms exactly on the relevant supports, and that all transformations and averages below are well-defined. We additionally require the action transformation to commute with arithmetic averaging: for every $h\in H$ and admissible collection of actions $\{a_k\}_{k\in H}$,
\begin{equation}
  \phi_a\!\left(h,\frac{1}{|H|}\sum_{k\in H}a_k\right)
  =\frac{1}{|H|}\sum_{k\in H}\phi_a(h,a_k).
  \label{eq:fa_mean_compatibility}
\end{equation}
This condition holds, for example, when each $\phi_a(h,\cdot)$ is affine and the averages remain in the action domain. Under these assumptions, reindexing the sum by $k=g\circ h$ gives
\begin{align*}
  \pi_\theta(\phi_o(h,o))
   & =\frac{1}{|H|}\sum_{g\in H}\phi_a\!\left(g^{-1},f_\theta(\phi_o(g\circ h,o))\right)                 \\
   & =\frac{1}{|H|}\sum_{k\in H}\phi_a\!\left(h,\phi_a\!\left(k^{-1},f_\theta(\phi_o(k,o))\right)\right) \\
   & =\phi_a(h,\pi_\theta(o)).
\end{align*}
The first two equalities use the exact group-operation axioms, and the last uses Eq.~\eqref{eq:fa_mean_compatibility}. Thus, Eq.~\eqref{eq:fa_policy} defines an $H$-equivariant policy under the stated conditions.

\paragraph{Scope for learned transformations.}
In \method{}, $\phi_a(g,\cdot)=D_a(\rho_a(g)E_a(\cdot))$ is nonlinear, and the learned coupled observation--action transformations are regularized to satisfy the group-operation axioms. Neither exact group consistency nor Eq.~\eqref{eq:fa_mean_compatibility} is guaranteed by this parameterization. We assess the resulting frame-averaged BC policy, EBC-\method{}, through downstream policy success rates; the exact-equivariance guarantee above applies under its stated assumptions.

\subsection{Policy Architecture and Training Objective}
\label{appendix:ebc_training}

We instantiate $f_\theta$ with a pretrained ResNet-18 backbone followed by a two-layer MLP with 64 hidden units per layer and ReLU activations. The same base policy is applied to every transformed observation in Eq.~\eqref{eq:fa_policy}.

During BC training, we hold the perceptual autoencoder and the observation and action symmetry modules fixed and optimize the base-policy parameters $\theta$. The mean-squared BC loss is applied to the final averaged action:
\begin{equation}
  \min_\theta~ \mathcal L_{\mathrm{EBC}}(\theta)
  ~:=~
  \mathbb E_{(o,a)\sim\mathcal D}\Big[\,
    \|\pi_\theta(o)-a\|_2^2
    \,\Big].
  \label{eq:ebc_loss}
\end{equation}

Gradients propagate through the fixed action transformations to train $f_\theta$ within the frame-averaged policy.

\subsection{Baselines and Evaluation Protocol}
\label{appendix:ebc_eval}

We evaluate policies by rolling them out on the four RGB Robosuite tasks described in Appendix~\ref{appendix:env_description} and report per-task episodic success rates and their average.
We compare vanilla \textbf{BC}, which omits frame averaging, with four frame-averaged BC variants that use different coupled observation--action transformations. \textbf{EBC-\method} uses our discovered transformations; \textbf{EBC-SymGAN} and \textbf{EBC-LaLiGAN} reuse the GAN-discovered transformations from Section~\ref{sec:exp_setup}; and \textbf{EBC-ExSym} uses the hand-designed \emph{Extrapolated Symmetry}~\citep{wang2023surprising}.
ExSym rotates images in pixel space and applies the following transformation directly to normalized action coordinates:
\begin{equation*}
  (\Delta x,\Delta y,\Delta z,\Delta\psi,\Delta w)\;\mapsto\;
  \big(R(\theta_g)(\Delta x,\Delta y)^{\!\top},\,\Delta z,\,\Delta\psi,\,\Delta w\big),
\end{equation*}
where $R(\theta_g)$ is the $2$D rotation matrix. This rotates only the planar translation components; $\Delta\psi$ or $\Delta w$ is omitted when the task has no yaw or gripper control.
For all frame-averaged BC variants we sweep the subgroup size $n=|H|$ with $H=C_n$ and $n\in\{2,4,8,16\}$.

\subsection{Effect of Subgroup Size}
\label{appendix:ebc_subgroup_results}

Table~\ref{tab:ebc_results} reports the complete RGB subgroup sweep for $H\in\{C_2,C_4,C_8,C_{16}\}$, including the $C_{16}$ results from main paper Table~\ref{tab:ebc_results_main}. The average success rate of EBC-\method{} increases from $80.19\%$ at $C_2$ to $81.91\%$ at $C_4$, $83.57\%$ at $C_8$, and $85.12\%$ at $C_{16}$, exceeding vanilla BC ($80.48\%$) for $|H|\geq4$. Thus, $C_{16}$ achieves the highest average success rate among the tested subgroup sizes, although the best subgroup varies by task.

Averaging over more group elements requires more transformed policy evaluations per action. Its performance benefit also depends on the transformations: EBC-ExSym, EBC-SymGAN, and EBC-LaLiGAN remain below vanilla BC on average at every tested subgroup size. These comparisons support the utility of \method{}'s learned transformations for frame averaging.

\begin{table}[htbp]
  \centering
  \caption{Episodic success rates in percent (mean $\pm$ std) for $H \in \{C_2, C_4, C_8, C_{16}\}$, with vanilla BC as reference. The $C_{16}$ rows reproduce main paper Table~\ref{tab:ebc_results_main}. \textbf{Bold} indicates the highest mean in each column across all listed methods and subgroup sizes, including BC.}
  \label{tab:ebc_results}
  \begin{adjustbox}{width=1.0\linewidth}
    \begin{tabular}{l|c|cccc|c}
      \toprule
      Method                                            & $H$      & \textbf{Drawer Opening} & \textbf{Drawer Closing} & \textbf{Cube Dropping}  & \textbf{Hammer Lifting} & \textbf{Average}        \\ \midrule
      \multirow{4}{*}{EBC-ExSym}                        & $C_{2}$  & $62.5 \pm 2.1$          & $85.2 \pm 1.4$          & $79.5 \pm 4.5$          & $84.8 \pm 4.3$          & $78.0 \pm 9.8$          \\
                                                        & $C_{4}$  & $61.5 \pm 2.6$          & $86.1 \pm 3.5$          & $83.5 \pm 4.7$          & $87.4 \pm 2.2$          & $79.6 \pm 11.1$         \\
                                                        & $C_{8}$  & $62.1 \pm 0.8$          & $84.5 \pm 5.0$          & $79.5 \pm 4.6$          & $85.3 \pm 5.6$          & $77.9 \pm 10.4$         \\
                                                        & $C_{16}$ & $64.7 \pm 2.2$          & $86.4 \pm 4.7$          & $80.9 \pm 5.9$          & $85.1 \pm 5.7$          & $79.3 \pm 9.9$          \\ \midrule
      \multirow{4}{*}{EBC-SymGAN}                       & $C_{2}$  & $69.7 \pm 7.5$          & $87.0 \pm 2.2$          & $79.6 \pm 7.9$          & $81.7 \pm 2.7$          & $79.5 \pm 8.5$          \\
                                                        & $C_{4}$  & $67.5 \pm 8.0$          & $86.6 \pm 2.6$          & $83.5 \pm 4.0$          & $68.1 \pm 4.9$          & $76.4 \pm 10.2$         \\
                                                        & $C_{8}$  & $68.7 \pm 8.7$          & $83.7 \pm 5.0$          & $84.1 \pm 5.1$          & $57.5 \pm 6.7$          & $73.5 \pm 12.9$         \\
                                                        & $C_{16}$ & $66.3 \pm 11.3$         & $85.8 \pm 3.6$          & $81.9 \pm 9.4$          & $49.8 \pm 16.3$         & $71.0 \pm 18.1$         \\ \midrule
      \multirow{4}{*}{EBC-LaLiGAN}                      & $C_{2}$  & $69.0 \pm 1.0$          & $80.7 \pm 6.6$          & $85.1 \pm 5.5$          & $82.5 \pm 0.8$          & $79.3 \pm 7.5$          \\
                                                        & $C_{4}$  & $69.0 \pm 3.1$          & $80.2 \pm 8.9$          & $79.5 \pm 8.3$          & $69.9 \pm 10.1$         & $74.7 \pm 9.6$          \\
                                                        & $C_{8}$  & $67.9 \pm 1.5$          & $73.3 \pm 3.0$          & $83.1 \pm 9.7$          & $56.6 \pm 2.4$          & $70.2 \pm 10.9$         \\
                                                        & $C_{16}$ & $65.3 \pm 2.0$          & $73.6 \pm 6.2$          & $84.5 \pm 6.2$          & $61.3 \pm 5.6$          & $71.2 \pm 10.3$         \\ \midrule
      \multicolumn{1}{l|}{\multirow{4}{*}{EBC-\method}} & $C_{2}$  & $66.3 \pm 1.6$          & $\mathbf{90.1 \pm 3.6}$ & $77.8 \pm 4.2$          & $86.5 \pm 2.6$          & $80.2 \pm 9.7$          \\
      \multicolumn{1}{l|}{}                             & $C_{4}$  & $65.2 \pm 2.5$          & $86.9 \pm 1.3$          & $86.0 \pm 3.5$          & $\mathbf{89.6 \pm 2.0}$ & $81.9 \pm 10.1$         \\
      \multicolumn{1}{l|}{}                             & $C_{8}$  & $69.3 \pm 3.1$          & $90.0 \pm 0.6$          & $87.8 \pm 3.5$          & $87.2 \pm 3.4$          & $83.6 \pm 8.8$          \\
                                                        & $C_{16}$ & $\mathbf{71.7 \pm 1.4}$ & $89.8 \pm 2.0$          & $\mathbf{90.3 \pm 2.9}$ & $88.7 \pm 2.5$          & $\mathbf{85.1 \pm 8.1}$ \\ \midrule
      BC                                                & -        & $71.4 \pm 0.8$          & $88.1 \pm 5.9$          & $80.6 \pm 2.5$          & $81.8 \pm 3.0$          & $80.5 \pm 7.0$          \\ \bottomrule
    \end{tabular}
  \end{adjustbox}
\end{table}

%% file: appendix/sim2real.tex
\FloatBarrier
\section{Sim-to-Real Experiments}
\label{appendix:sim2real}

This appendix details the sim-to-real experiments in Section~\ref{sec:sim2real}. We describe the hardware and observation preprocessing, specify the training and deployment protocol, and then report simulation and real-world results. Qualitative visualizations of the learned observation--action transformations follow the quantitative evaluation.

\input{appendix/env_sim2real}
\FloatBarrier
\input{appendix/evaluation_sim2real}
\FloatBarrier
\input{appendix/real_world_validation}
\clearpage
\input{appendix/sim2real_symmetry_vis}

%% file: appendix/env_sim2real.tex
\subsection{Hardware and Observation Preprocessing}
\label{appendix:real_env_description}

\subsubsection{Hardware Setup}
\label{appendix:real_hardware}

We deploy the learned policy on a $7$-DoF Kinova GEN3 manipulator equipped with a Robotiq 2F85 parallel-jaw gripper.
Visual observations are provided by an Intel RealSense D435 RGB-D camera mounted on a fixed stand in front of the workspace.
The camera-to-robot extrinsics are obtained once via standard eye-to-hand calibration and reused across all tasks.

\subsubsection{Observation Preprocessing}
\label{appendix:real_obs_preprocess}

The policy consumes a heightmap and a target-object segmentation mask rather than raw RGB.
Figure~\ref{fig:real_obs} illustrates the preprocessing pipeline, and main-paper Figure~\ref{fig:real_tasks_obs}(e,f) compares the resulting real-world observation against its simulated counterpart.

\paragraph{Heightmap.}
We project the depth image into a top-down heightmap of the workspace and clip the height values to the range $[0,\,0.3]\,\text{m}$ above the table surface.
The clipped heights are then linearly rescaled to $[0,\,255]$ and stored as an $8$-bit single-channel image, matching the dynamic range used in simulation.

\paragraph{Segmentation mask.}
The target-object segmentation mask is obtained with SAM 2~\citep{ravi2024sam2}, prompted using a height-derived foreground prior.
The resulting binary segmentation mask is stacked with the heightmap to form the final policy input.

\begin{figure}[htbp]
  \centering
  \includegraphics[width=0.9\linewidth]{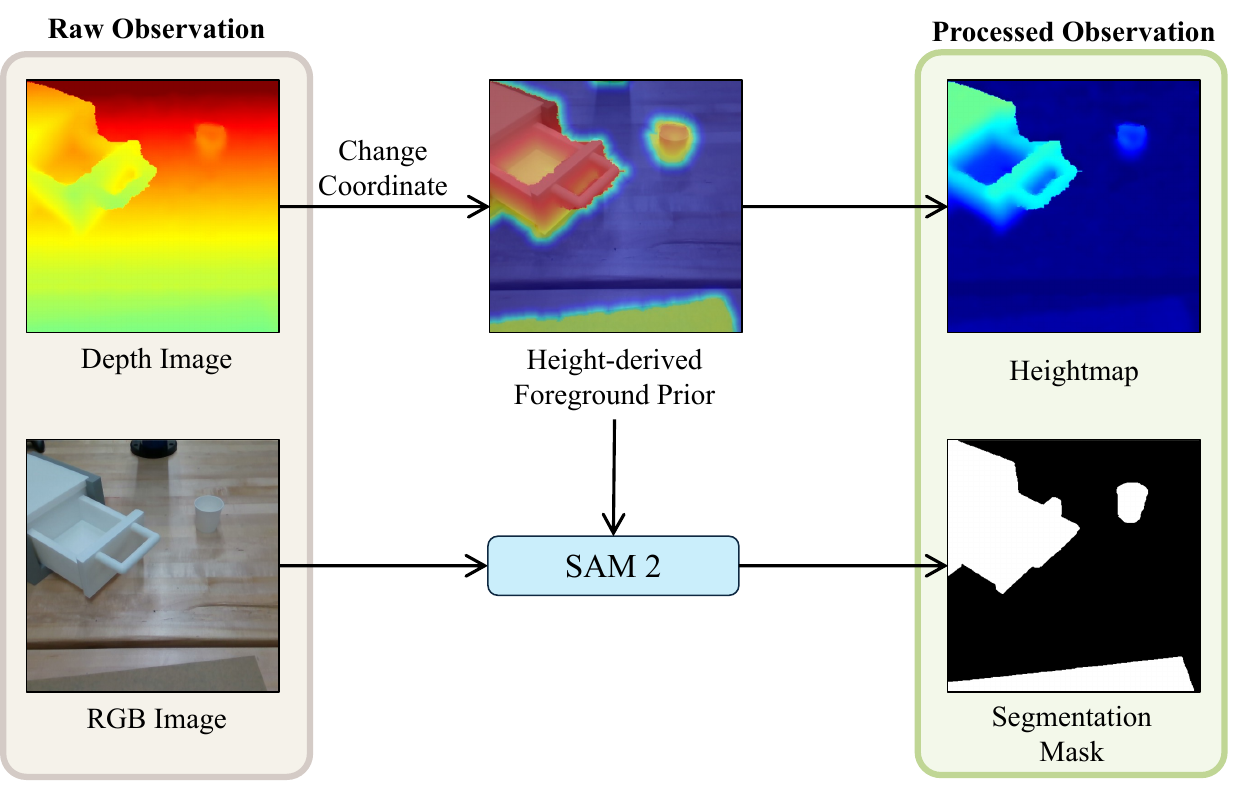}
  \caption{\textbf{Real-world observation preprocessing.}
    Depth is projected into a top-down heightmap, and a height-derived foreground prior prompts SAM~2 to segment the RGB image.
    The heightmap is clipped to $[0,0.3]$\,m above the table and rescaled to an $8$-bit image, then stacked with the binary segmentation mask to form the two-channel policy input.}
  \label{fig:real_obs}
\end{figure}

%% file: appendix/evaluation_sim2real.tex
\subsection{Training and Deployment Protocol}
\label{appendix:sim2real_training}

For real-world deployment, we use heightmap and segmentation-mask observations (Appendix~\ref{appendix:real_obs_preprocess}), similar to~\citet{lin2025constraintpreserving}.
We run Stage~1 and Stage~2 symmetry discovery in simulation on this modality using the procedure in Appendix~\ref{appendix:implementation}, then freeze the resulting perceptual autoencoder and the observation and action symmetry modules.

We train a separate EBC-\method{} policy entirely in simulation using these frozen transformations for frame averaging.
The base policy architecture follows Appendix~\ref{appendix:ebc_training}, with its input encoder adapted to the two-channel input, and the BC objective remains Eq.~\eqref{eq:ebc_loss}.
Relative to the RGB experiments, the changes are the observation modality, the corresponding input adaptation, and the observation--action transformations learned for this modality.

The resulting policy is deployed \emph{zero-shot} on the real robot using the hardware and preprocessing in Appendix~\ref{appendix:real_env_description}.
We report simulation performance in Appendix~\ref{appendix:sim2real_success_rate} and real-world outcomes in Appendix~\ref{appendix:real_eval}.

\subsection{Simulation Results with Heightmap and Segmentation-Mask Observations}
\label{appendix:sim2real_success_rate}

We evaluate behavior-cloning policies in simulation using the same heightmap and segmentation-mask observation modality used at real-world deployment (Appendix~\ref{appendix:real_obs_preprocess}).
This isolates the contribution of the discovered symmetry from the observation-modality change: every policy in Table~\ref{tab:real_obs_results} consumes the same heightmap and segmentation-mask input, and the only difference between rows is whether the discovered observation--action transformations $(\phi_o,\phi_a)$ are used for frame-averaged BC (EBC-\method{}, Appendix~\ref{appendix:frame_averaging}) and, if so, with which finite cyclic subgroup $H=C_n$.

Each cell reports mean and standard deviation of episodic success rate over three seeds and the same evaluation protocol used in Appendix~\ref{appendix:ebc_eval}.

\paragraph{Results.}
Table~\ref{tab:real_obs_results} shows that the trends observed under RGB observations carry over to the heightmap and segmentation-mask modality: increasing $|H|$ from $C_2$ to $C_{16}$ monotonically raises the average success rate from $75.31\%$ to $81.97\%$, and EBC-\method{} with $C_{16}$ attains the best average and the best per-task numbers on Drawer Opening and Cube Dropping.
The vanilla BC baseline retains the lead on Drawer Closing, where the task is already largely saturated ($\approx 91\%$) and frame averaging offers little remaining headroom; EBC-\method{} with $H=C_8$ marginally edges out the other rows on Hammer Lifting.
Overall, all EBC-\method{} configurations with $H\in\{C_4,C_8,C_{16}\}$ outperform the BC baseline on average, supporting the claim that \method{} also discovers a usable $SO(2)$ symmetry under the deployment-time heightmap and segmentation-mask modality.

\begin{table}[!htbp]
  \centering
  \caption{
    \textbf{Simulated success rate with heightmap and segmentation-mask observations.}
    Episodic success rate (mean $\pm$ std over three seeds) on the heightmap and segmentation-mask input used at real-world deployment.
    We compare vanilla BC against EBC-\method{} with $H=C_n$ for $n\in\{2,4,8,16\}$.
    \textbf{Bold} indicates the best entry per column.
  }
  \label{tab:real_obs_results}
  \begin{adjustbox}{width=1.0\linewidth}
    \begin{tabular}{l|c|cccc|c}
      \toprule
      Method                                            & $H$      & \textbf{Drawer Opening}      & \textbf{Drawer Closing}      & \textbf{Cube Dropping}       & \textbf{Hammer Lifting}      & \textbf{Average}             \\ \midrule
      \multicolumn{1}{l|}{\multirow{4}{*}{EBC-\method}} & $C_{2}$  & $0.5713 \pm 0.0389$          & $0.8843 \pm 0.0416$          & $0.7453 \pm 0.1427$          & $0.8113 \pm 0.0413$          & $0.7531 \pm 0.1388$          \\
      \multicolumn{1}{l|}{}                             & $C_{4}$  & $0.5927 \pm 0.0208$          & $0.8971 \pm 0.0539$          & $0.8046 \pm 0.0351$          & $0.8331 \pm 0.0804$          & $0.7819 \pm 0.1275$          \\
      \multicolumn{1}{l|}{}                             & $C_{8}$  & $0.6118 \pm 0.0203$          & $0.9070 \pm 0.0359$          & $0.8271 \pm 0.0216$          & $\mathbf{0.8603 \pm 0.0631}$ & $0.8016 \pm 0.1229$          \\
      \multicolumn{1}{l|}{}                             & $C_{16}$ & $\mathbf{0.6694 \pm 0.0794}$ & $0.9050 \pm 0.0380$          & $\mathbf{0.8536 \pm 0.0423}$ & $0.8507 \pm 0.0253$          & $\mathbf{0.8197 \pm 0.1028}$ \\ \midrule
      BC                                                & -        & $0.6100 \pm 0.0361$          & $\mathbf{0.9159 \pm 0.0075}$ & $0.7266 \pm 0.0825$          & $0.8566 \pm 0.0903$          & $0.7772 \pm 0.1351$          \\ \bottomrule
    \end{tabular}
  \end{adjustbox}
\end{table}

%% file: appendix/real_world_validation.tex
\subsection{Real-World Validation}
\label{appendix:real_eval}

We evaluate the policy on the real robot using the hardware and preprocessing in Appendix~\ref{appendix:real_env_description} and the training and deployment protocol in Appendix~\ref{appendix:sim2real_training}.
The four task setups are shown in Figure~\ref{fig:real_tasks_obs}(a--d) of the main paper.
Table~\ref{tab:real_results} reports the success counts over 20 trials per task.

\begin{table}[!htbp]
  \centering
  \caption{Real-world task success counts over 20 trials per task.}
  \label{tab:real_results}
  \begin{tabular}{l|c}
    \toprule
    Task           & Successful trials \\
    \midrule
    Drawer Opening & 13/20             \\
    Drawer Closing & 19/20             \\
    Cube Dropping  & 9/20              \\
    Hammer Lifting & 16/20             \\
    \bottomrule
  \end{tabular}
\end{table}

%% file: appendix/sim2real_symmetry_vis.tex
\subsection{Qualitative Visualization of Learned Observation--Action Transformations}
\label{appendix:sim2real_symmetry_vis}

Figure~\ref{fig:vis_symmetry_real_all_envs} complements the policy results with visualizations of the coupled observation--action transformations for simulated heightmap and segmentation-mask observations. We show each $g\in C_4$ for all four manipulation tasks.
In every environment, $\phi_o(g,o)$ transforms the top-down workspace and its accompanying segmentation mask jointly so that the target object remains correctly localized in the transformed observation.
On the action side, $\phi_a(g,a)$ adjusts the normalized planar commands $(\Delta x,\Delta y)$ jointly with the observation transformation. As in the RGB experiments, $g$ specifies a latent rotation whose effect in observation and action coordinates is learned.

\begin{figure}[htbp]
  \centering
  \includegraphics[width=\linewidth]{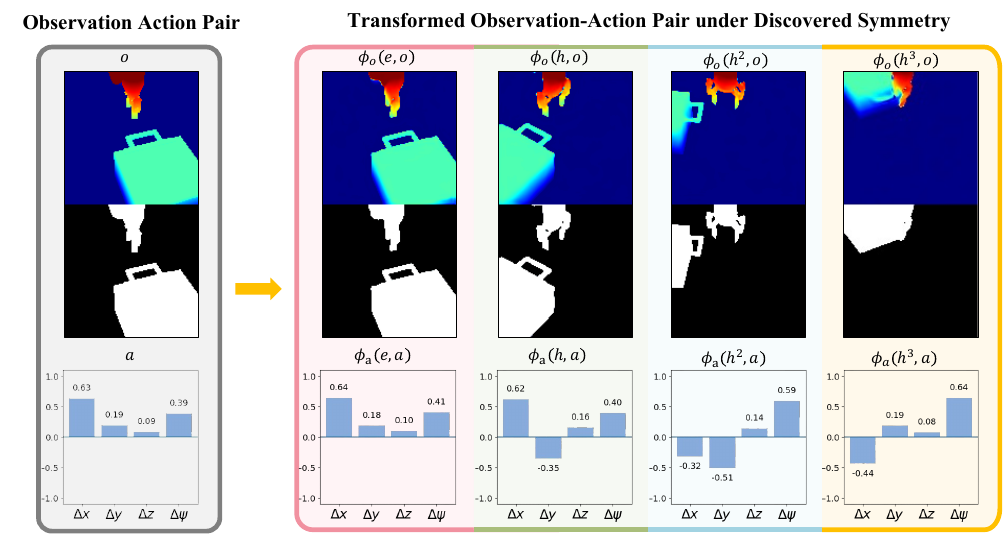}\\[2pt]
  \small (a) Drawer Opening\\[6pt]
  \includegraphics[width=\linewidth]{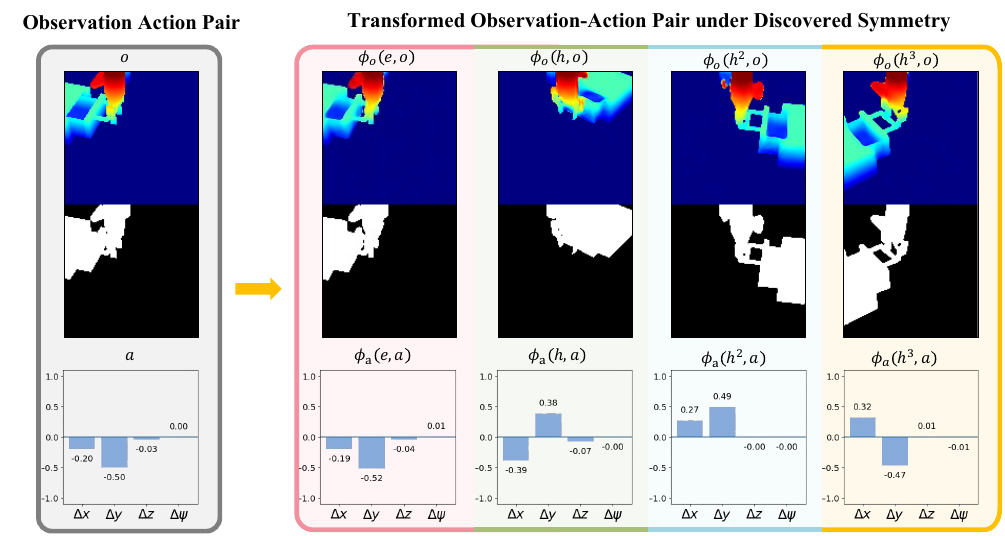}\\[2pt]
  \small (b) Drawer Closing
  \caption{
    \textbf{Learned transformations with simulated heightmap and segmentation-mask observations.}
    Each task panel shows one fixed original pair $(o,a)$ on the left and four transformed pairs on the right for $g\in C_4=\{e,h,h^2,h^3\}$, with latent angles $0^\circ,90^\circ,180^\circ,270^\circ$, respectively.
    Within each observation, the heightmap appears above the segmentation mask; action bars below show normalized command components on a $[-1,1]$ scale.
    The same $g$ transforms both observation channels and the action jointly; decoded transformations need not be geometric rotations by the latent angles.
    Panels (c)--(d) continue on the next page.
  }
  \label{fig:vis_symmetry_real_all_envs}
\end{figure}

\begin{figure}[t!]\ContinuedFloat
  \centering
  \includegraphics[width=\linewidth]{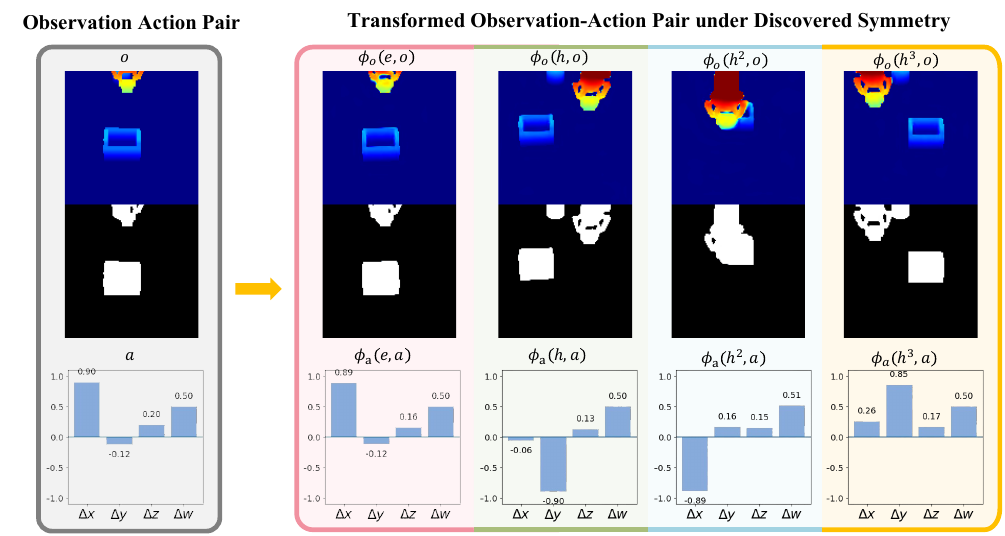}\\[2pt]
  \small (c) Cube Dropping\\[6pt]
  \includegraphics[width=\linewidth]{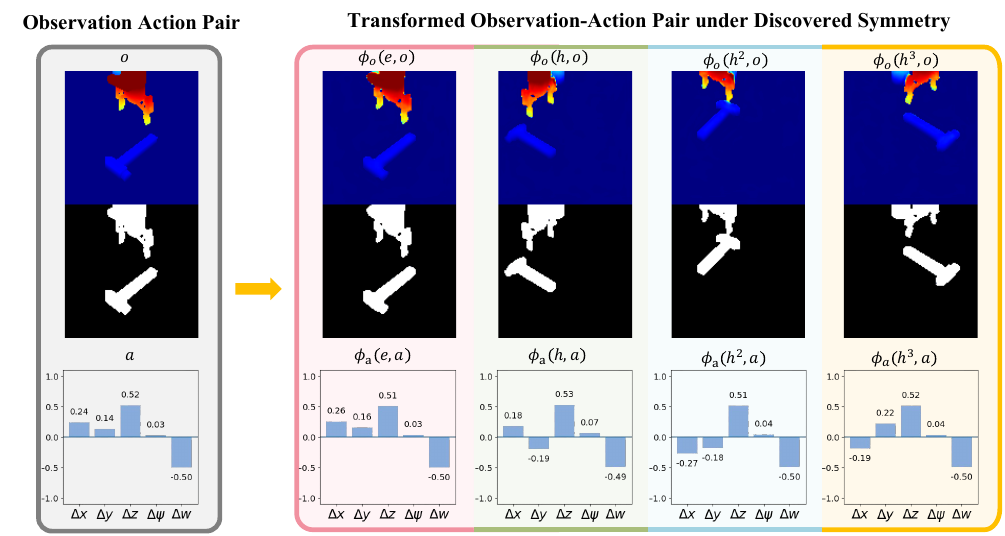}\\[2pt]
  \small (d) Hammer Lifting
  \caption{\textbf{Learned transformations with simulated heightmap and segmentation-mask observations (continued).}
    Panels (c)--(d) show Cube Dropping and Hammer Lifting, with the original pair on the left and four transformed pairs on the right.
    Columns $e,h,h^2,h^3$ correspond to latent angles $0^\circ,90^\circ,180^\circ,270^\circ$, respectively; decoded transformations need not be geometric rotations by these angles.
    Each observation shows the heightmap above the segmentation mask, with normalized action components on a $[-1,1]$ scale below.}
\end{figure}